\def\eqref#1{equation~\ref{#1}}
\def\1{\bm{1}}
\def\vd{{\bm{d}}}
\def\ve{{\bm{e}}}
\def\vh{{\bm{h}}}
\def\vs{{\bm{s}}}
\def\vv{{\bm{v}}}
\def\vw{{\bm{w}}}
\def\vx{{\bm{x}}}
\def\vz{{\bm{z}}}
\def\mA{{\bm{A}}}
\def\mB{{\bm{B}}}
\def\mC{{\bm{C}}}
\def\mD{{\bm{D}}}
\def\mH{{\bm{H}}}
\def\mI{{\bm{I}}}
\def\mL{{\bm{L}}}
\def\mO{{\bm{O}}}
\def\mQ{{\bm{Q}}}
\def\mS{{\bm{S}}}
\def\mU{{\bm{U}}}
\def\mV{{\bm{V}}}
\def\mW{{\bm{W}}}
\def\mX{{\bm{X}}}
\def\mZ{{\bm{Z}}}
\DeclareMathAlphabet{\mathsfit}{\encodingdefault}{\sfdefault}{m}{sl}
\SetMathAlphabet{\mathsfit}{bold}{\encodingdefault}{\sfdefault}{bx}{n}
\def\gN{{\mathcal{N}}}
\def\sR{{\mathbb{R}}}
\newcommand{\E}{\mathbb{E}}
\newcommand{\KL}{D_{\mathrm{KL}}}
\newcommand{\Var}{\mathrm{Var}}
\DeclareMathOperator*{\argmax}{arg\,max}
\newtheorem{theorem}{Theorem}
\newtheorem{corollary}{Corollary}[theorem]   
\newtheorem{remark}{Remark}[theorem]
\newcommand\norm[1]{\left\lVert#1\right\rVert}
\newcommand*{\tran}{^{\mkern-1.5mu\mathsf{T}}}
\newcommand\blfootnote[1]{%
  \begingroup
  \renewcommand\thefootnote{}\footnote{#1}%
  \addtocounter{footnote}{-1}%
  \endgroup
}
\title{Identifying nonlinear dynamical systems \\ with multiple time scales and long-range \\ dependencies}
\author{Dominik Schmidt$^{1\:\!*}$, Georgia Koppe$^{1,2\:\!\ast}$, Zahra Monfared$^1$, Max Beutelspacher$^1$, \\
\textbf{Daniel Durstewitz$^{1,3\,\dagger}$} \\
}
\renewcommand{\eqref}[1]{eq.~\ref{#1}}
\begin{document}

\maketitle
\vspace{-0.7cm}
\begin{abstract}
\vspace{-0.1cm}
A main theoretical interest in biology and physics is to identify the nonlinear dynamical system (DS) that generated observed time series. Recurrent Neural Networks (RNNs) are, in principle, powerful enough to approximate any underlying DS, but in their vanilla form suffer from the exploding vs. vanishing gradients problem. Previous attempts to alleviate this problem resulted either in more complicated, mathematically less tractable RNN architectures, or strongly limited the dynamical expressiveness of the RNN. 
Here we address this issue by suggesting a simple regularization scheme for vanilla RNNs with ReLU activation which enables them to solve long-range dependency problems and express slow time scales, while retaining a simple mathematical structure which makes their DS properties partly analytically accessible. We prove two theorems that establish a tight connection between the regularized RNN dynamics and its gradients, illustrate on DS benchmarks that our regularization approach strongly eases the reconstruction of DS which harbor widely differing time scales, and show that our method is also en par with other long-range architectures like LSTMs on several tasks.
\blfootnote{
\hspace{-0.54cm}$^1$Department of Theoretical Neuroscience, $^2$Clinic for Psychiatry and Psychotherapy, \\
Central Institute of Mental Health, Medical Faculty Mannheim,
Heidelberg University\\
$^3$Faculty of Physics and Astronomy, Heidelberg University \& Bernstein Center Computational Neuroscience \\
$^*$These authors contributed equally\\
$^\dagger$Corresponding author: \texttt{daniel.durstewitz@zi-mannheim.de}}
\end{abstract}

\vspace{-0.4cm}
\section{Introduction}
\vspace{-0.2cm}
Theories in the natural sciences are often formulated in terms of sets of stochastic differential or difference equations, i.e. as stochastic dynamical systems (DS).
Such systems exhibit a range of common phenomena, like (limit) cycles, chaotic attractors, or specific bifurcations, which are the subject of nonlinear dynamical systems theory (DST; \cite{Strogatz, Ott2002}). A long-standing desire is to retrieve the generating dynamical equations directly from observed time series data \citep{Kantz2004}, and thus to `automatize' the laborious process of scientific theory building to some degree. A variety of machine and deep learning methodologies toward this goal have been introduced in recent years \citep{Chen2017, Champion2019, Ayed2019, Koppe2019, Hamilton2017, Razaghi2019, Hernandez2020}. Often these are based on sufficiently expressive series expansions for approximating the unknown system of generative equations, such as polynomial basis expansions \citep{Brunton2016, Champion2019} or recurrent neural networks (RNNs) \citep{Vlachas2018a, Hernandez2020, Durstewitz2017, Koppe2019}.
Formally, RNNs are (usually discrete-time) nonlinear DS that are dynamically universal in the sense that they can approximate to arbitrary precision the flow field of any other DS on compact sets of the real space  \citep{Funahashi1993, Kimura1998, Hanson2020}.
Hence, RNNs seem like a good choice for reconstructing -- in this sense of \textit{dynamically} equivalent behavior -- the set of governing equations underlying real time series data.\looseness=-1

However, RNNs in their vanilla form suffer from the ‘vanishing or exploding gradients’ problem \citep{Hochreiter1997, Bengio1994}: During training, error gradients tend to either exponentially explode or decay away across successive time steps, and hence vanilla RNNs face severe problems in capturing long time scales or long-range dependencies in the data. Specially designed RNN architectures equipped with gating mechanisms and linear memory cells have been proposed for mitigating this issue \citep{Hochreiter1997, Cho2014}. However, from a DST perspective, simpler models that can be more easily analyzed and interpreted in DS terms \citep{Monfared2020b, Monfared2020}, and for which more efficient inference algorithms exist that emphasize approximation of the true underlying DS \citep{Koppe2019, Hernandez2020, Zhao2020}, would be preferable. More recent solutions to the vanishing vs. exploding gradient problem attempt to retain the simplicity of vanilla RNNs by initializing or constraining the recurrent weight matrix to be the identity \citep{Le2015}, orthogonal \citep{Henaff2016, Helfrich2018} or unitary \citep{Arjovsky2016}. While merely initialization-based solutions, however, may be unstable and quickly dissolve during training, orthogonal or unitary constraints, on the other hand, are too restrictive for reconstructing DS, and more generally from a computational perspective as well \citep{Kerg2019}: For instance, neither chaotic behavior (that requires diverging directions) nor multi-stability, that is the coexistence of several distinct attractors, are possible. 

Here we therefore suggest a different solution to the problem which takes inspiration from computational neuroscience: Supported by experimental evidence \citep{Daie2015, Brody2003}, line or plane attractors have been suggested as a dynamical mechanism for maintaining arbitrary information in working memory \citep{Seung1996, Machens2005}, a goal-related active form of short-term memory. 
A line or plane attractor is a continuous set of marginally stable fixed points to which the system's state converges from some neighborhood, while along 
the line itself there is 
neither con- nor divergence (Fig. \ref{fig:line_attractor_config}\textbf A). Hence, a line attractor will perform a perfect integration of inputs and retain updated states indefinitely, while a slightly detuned line attractor will equip the system with arbitrarily slow time constants (Fig. \ref{fig:line_attractor_config}\textbf B). This latter configuration has been suggested as a dynamical basis for neural interval timing 
\citep{Durstewitz2003,Durstewitz2004}. The present idea is to exploit this dynamical setup for long short-term memory and 
arbitrary slow time scales by forcing part of the RNN's subspace toward a plane (line) attractor configuration through specifically designed regularization terms.\looseness=-1

Specifically, our goal here is not so much to beat the state of the art on long short-term memory tasks, but rather to address the exploding vs. vanishing gradient problem within a simple, dynamically tractable RNN, optimized for DS reconstruction and interpretation.
For this we build on piecewise-linear RNNs (PLRNNs) \citep{Koppe2019, Monfared2020} which employ ReLU activation functions. PLRNNs have a simple mathematical structure (see \eqref{eq:PLRNN}) which makes them dynamically \textit{interpretable} in the sense that many geometric properties of the system’s state space can in principle be computed analytically, including fixed points, cycles, and their stability 
(Suppl. \ref{section:suppl_fixed_points_cycles}; \cite{Koppe2019, Monfared2020b}), i.e. 
do not require numerical techniques \citep{Sussillo2013}. Moreover, PLRNNs constitute a type of piecewise linear (PWL) map for which many important bifurcations have been comparatively well characterized \citep{Monfared2020b, Avrutin2019}. PLRNNs can furthermore be translated into equivalent continuous time ordinary differential equation (ODE) systems \citep{Monfared2020} which comes with further advantages for analysis, e.g. continuous flow fields (Fig. \ref{fig:line_attractor_config}\textbf A,\textbf B).

We retain the PLRNN's structural simplicity and analytical tractability while mitigating the exploding vs. vanishing gradient problem by adding special regularization terms for a subset of PLRNN units to the loss function. These terms are designed to push the system toward line attractor configurations, without strictly enforcing them, along some – but not all – directions in state space. We further establish a tight mathematical relationship between the PLRNN dynamics and the behavior of its gradients during training. Finally, we demonstrate that our approach outperforms LSTM and other, initialization-based, methods on a number of `classical’ machine learning benchmarks \citep{Hochreiter1997}. Much more importantly in the present DST context, we demonstrate that our new regularization-supported inference efficiently captures all relevant time scales when reconstructing challenging nonlinear DS with multiple short- and long-range phenomena. 


\begin{figure*}[t]
    \center{\includegraphics[trim=0 0 0 0,clip,width=0.8\textwidth]
    {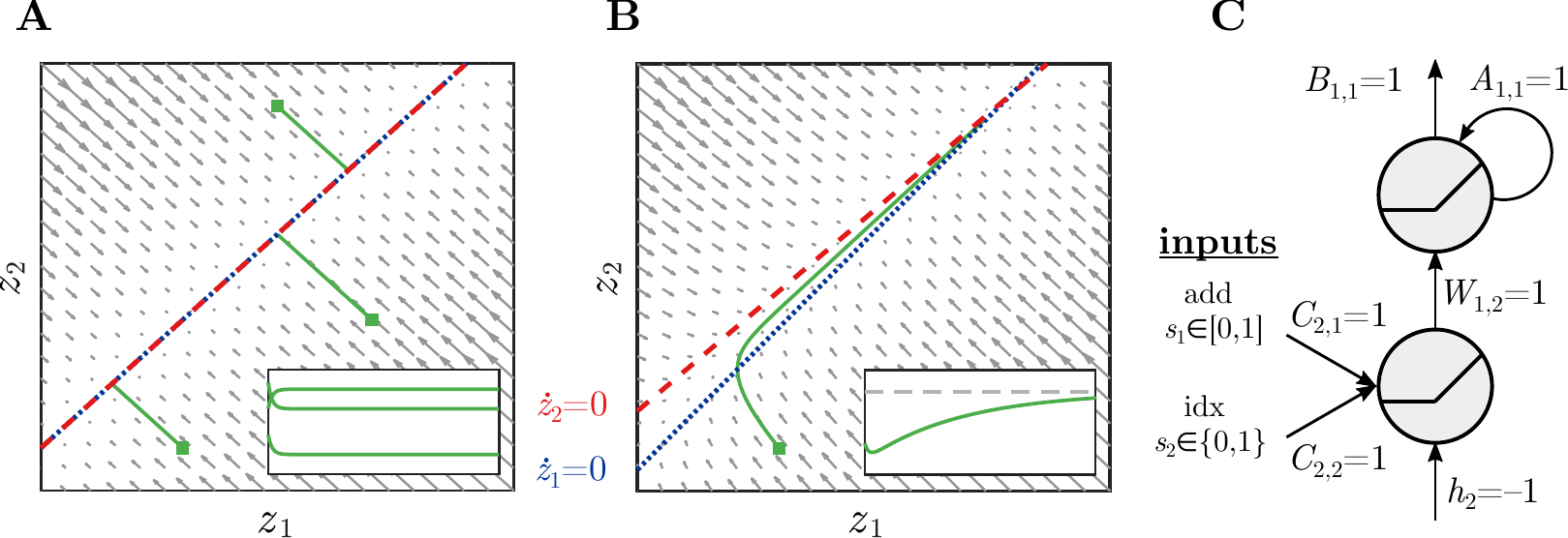}}
    \caption{
    \textbf A)--\textbf B): Illustration of the state space of a 2-unit RNN with flow field (grey) and nullclines (set of points at which the flow of one of the variables vanishes, in blue and red). Insets: Time graphs of $z_1$ for $T=30\,000$. \textbf A) Perfect line attractor. The flow converges to the line attractor, thus retaining states indefinitely in the absence of perturbations, as illustrated for 3 example trajectories (green). \textbf B) Slightly detuned line attractor. The system's state still converges toward the "attractor ghost", but then very slowly crawls up within the `attractor tunnel' (green trajectory) until it hits the stable fixed point at the intersection of nullclines. Within the tunnel, flow velocity is smoothly regulated by the gap between nullclines, thus enabling arbitrary time constants. \textbf C) Simple 2-unit solution to the addition problem exploiting the line attractor properties of ReLUs. The output unit serves as a perfect integrator (see Suppl.  \ref{section:supplement_2d_solution} for complete parameters). \vspace{-0.2cm}
    }
    \label{fig:line_attractor_config}
\end{figure*}

\section{Related work}\label{section:related_work}

\textit{Dynamical systems reconstruction.}
From a natural science perspective, the goal of reconstructing or identifying the underlying DS is substantially more ambitious than (and different from) building a system that `merely' yields good ahead predictions: In DS identification we require that the inferred model can \textit{freely reproduce} (when no longer guided by the data) the underlying attractor geometries and state space properties (see section \ref{section:performance_measures}, Fig. \ref{fig:S1}; \cite{Kantz2004}).

Earlier work using RNNs for DS reconstruction \citep{Roweis2002, Yu2005} mainly focused on inferring the posterior over latent trajectories $\mZ=\{\vz_1,\ldots, \vz_T\}$ given time series data $\mX=\{\vx_1,\ldots,\vx_T\}$, $p(\mZ|\mX)$, and on ahead predictions \citep{Lu2017}, as does much of the recent work on variational inference of DS \citep{Duncker2019a, Zhao2020, Hernandez2020}. Although this enables insight into the dynamics along the empirically observed trajectories, both -- posterior inference and good ahead predictions -- do not per se guarantee that the inferred models can generate the underlying attractor geometries on their own (see Fig. \ref{fig:S1}, \cite{Koppe2019}).
In contrast, if fully generative reconstruction of the underlying DS in this latter sense were achieved, formal analysis or simulation of the resulting RNN equations could provide a much deeper understanding of the dynamical mechanisms underlying empirical observations (Fig. \ref{fig:line_attractor_config} \textbf C). 

Some approaches geared toward this latter goal of full DS reconstruction make specific structural assumptions about the form of the DS equations (`white box approach'; \cite{Meeds2019, Raissi2018, Gorbach2017}), e.g. based on physical or biological domain knowledge, and focus on estimating the system's latent states and parameters, rather than approximating an unknown DS based on the observed time series information alone (`black box approach').
Others \citep{Trischler2016, Brunton2016, Champion2019} attempt to approximate the flow field, obtained e.g. by numerical differentiation, directly through basis expansions or neural networks. However, numerical derivatives are problematic for their high variance and other numerical issues \citep{Raissi2018, Baydin2018, Chen2017}.
Another factor to consider is that in many biological systems like the brain the intrinsic dynamics are highly stochastic with many noise sources, like probabilistic synaptic release \citep{Stevens2003}. Models that do not explicitly account for dynamical process noise \citep{Ayed2019, Champion2019, Rudy2019} are therefore less suited and more vulnerable to model misspecification.
Finally, some fully probabilistic models for DS reconstruction based on GRU \citep{Fraccaro2016}, LSTM \citep{Zheng2017, Vlachas2018a}, or radial basis function \citep{Zhao2020} networks, are not easily interpretable and amenable to DS analysis in the sense defined in sect. \ref{chapter:theo_ana}.
Most importantly, none of these previous approaches consider the long-range dependency problem within more easily tractable RNNs for DS.

\textit{Long-range dependency problems in RNNs}. Error gradients in vanilla RNNs 
tend to either explode or vanish due to the large product of derivative terms that results from recursive application of the chain rule over time steps \citep{Hochreiter1991, Bengio1994, Hochreiter1997}.
To address this issue, RNNs with gated memory cells \citep{Hochreiter1997, Cho2014} have been specifically designed, but their more complicated mathematical structure makes them less amenable to a systematic DS analysis. Even simple objects like fixed points of these systems have to be found by numerical techniques \citep{Sussillo2013, Jordan2019}. Thus, approaches which retain the simplicity of vanilla RNNs while solving the exploding vs. vanishing gradients problem would be desirable. Recently, \cite{Le2015} observed that initialization of the recurrent weight matrix $\mW$ to the identity in ReLU-based RNNs may yield performance en par with LSTMs on standard machine learning benchmarks.
\cite{Talathi2015} expanded on this idea by initializing the recurrence matrix such that its largest absolute eigenvalue is 1.
Later work enforced orthogonal \citep{Henaff2016, Helfrich2018, Jing2019} or unitary \citep{Arjovsky2016} constraints on the recurrent weight matrix during training.
While this appears to yield long-term memory performance sometimes superior to that of LSTMs (but see \citep{Henaff2016}), these networks are limited in their computational power \citep{Kerg2019}. 
This may be a consequence of the fact that RNNs with orthogonal recurrence matrix are quite restricted in the range of dynamical phenomena they can produce, e.g. chaotic attractors are not possible since (locally) diverging eigen-directions are disabled.

Our approach therefore is to establish line/plane attractors only along some but not all directions in state space, and to only push the RNN toward these configurations but not strictly enforce them, such that convergence or (local) divergence of RNN dynamics is still possible. We furthermore implement these concepts through regularization terms in the loss functions, rather than through mere initialization. This way plane attractors are encouraged throughout training without fading away.\looseness=-1

\section{Model formulation and theoretical analysis}
\label{section:model_form_optim_approaches}
\subsection{Basic model formulation}\label{chapter:model_and_preliminaries}
Assume we are given two multivariate time series $\mS=\{\vs_t\}$ and $\mX=\{\vx_t\}$, one we will denote as `inputs' ($\mS$) and the other as `outputs' ($\mX$). In the `classical' (\textit{supervised}) machine learning setting, we usually wish to map $\mS$ on $\mX$ through a RNN with latent state equation $\vz_t=F_\theta\left(\vz_{t-1},\vs_t\right)$ and outputs $\vx_t \sim p_\lambda\left(\vx_t|\vz_t\right)$, as for instance in the `addition problem' \citep{Hochreiter1997}. In DS reconstruction, in contrast, we usually have a dense time series $\mX$ from which we wish to infer (\textit{unsupervised}) the underlying DS, where $\mS$ may provide an additional forcing function or sparse experimental inputs or perturbations. While our focus in this paper is on this latter task, DS reconstruction, we will demonstrate that our approach brings benefits in both these settings.

Here we consider for the latent model a PLRNN \citep{Koppe2019} which takes the form 
\begin{align}
    \label{eq:PLRNN}
    \vz_t &= \mA \vz_{t-1} + \mW\phi(\vz_{t-1}) + \mC \vs_t + \vh + \bm{\varepsilon}_t, 
    \,\,\,\bm{\varepsilon}_t \sim \gN(0, \bm{\Sigma}),
\end{align}
where $\vz_t \in \sR^{M\times 1}$ is the hidden state (column) vector of dimension $M$, $\mA\in\sR^{M\times M}$ a \emph{diagonal} and $\mW\in\sR^{M\times M}$ an \emph{off-diagonal} matrix, $\vs_t \in \sR^{K\times 1}$ the external input of dimension $K$, $\mC\in\sR^{M\times K}$ the input mapping, $\vh\in\sR^{M\times 1}$ a bias, and $\bm{\varepsilon}_t$ a Gaussian noise term with diagonal covariance matrix $\mathrm{diag}(\bm{\Sigma})\in\sR_+^{M}$. The nonlinearity $\phi(\vz)$ is a ReLU, $\phi(\vz)_i = \mathrm{max}(0,z_i), i\in \{1, \ldots, M\}$.
This specific formulation represents a discrete-time version of 
firing rate (population) models as used in computational neuroscience \citep{Song2016, Durstewitz2017, Engelken2020}.

We will assume that the latent RNN states $\vz_t$ are coupled to the actual observations $\vx_t$ through a simple observation model of the form 
\begin{equation}
   \vx_t = \mB g(\vz_t) + \bm{\eta}_t,~\bm{\eta}_t \sim \gN(0, \bm{\Gamma})
    \label{eq:PLRNN_obs}
\end{equation}
in the case of observations $\vx_t \in \sR^{N\times 1}$, where $\mB\in\sR^{N\times M}$ is a factor loading matrix, $g$ some (usually monotonic) nonlinear transfer function (e.g., ReLU), and $\mathrm{diag}(\bm{\Gamma})\in\sR_+^{N}$ the diagonal covariance matrix of the Gaussian observation noise, or through a softmax function in case of categorical observations $x_{i,t}\in\{0,1\}$ (see Suppl. \ref{section:suppl_details_benchmark_model} for details).


\subsection{Regularization approach}\label{section:regularization_approach}
First note that by letting $\mA=\mI$, $\mW=\bm 0$, and $\vh=\bm 0$ in \eqref{eq:PLRNN}, every point in $\vz$ space will be a \textit{marginally stable} fixed point of the system, leading it to perform a perfect integration of external inputs as in parametric working memory \citep{Machens2005, Brody2003}.\footnote{Note that this very property of marginal stability required for input integration also makes the system sensitive to noise perturbations directly \emph{on} the manifold attractor. Interestingly, this property has indeed been observed experimentally for real neural integrator systems \citep{Major2004,Mizumori1993}.}
This is similar in spirit to \cite{Le2015} who initialized RNN parameters such that it performs an identity mapping for $z_{i,t}\geq0$. However, here 1) we use a neuroscientifically motivated network architecture (\eqref{eq:PLRNN}) that enables the identity mapping across the variables' \textit{entire support}, $z_{i,t}\in\left[-\mathrm{\infty}, +\mathrm{\infty}\right]$, which we conjecture will be of advantage for establishing long short-term memory properties, 2) we encourage this mapping only for a subset $M_{\mathrm{reg}}\leq{M}$ of units (Fig. \ref{fig:matrix_regularization}), leaving others free to perform arbitrary computations, and 3) we stabilize this configuration throughout training by introducing a specific $L_2$ regularization for parameters $\mA$, $\mW$, and $\vh$ in \eqref{eq:PLRNN}. When embedded into a larger, (locally) convergent system, we will call this configuration more generally a \textit{manifold attractor}.

That way, we divide the units into two types, where the regularized units serve as a memory that tends to decay very slowly (depending on the size of the regularization term), while the remaining units maintain the flexibility to approximate any underlying DS, yet retaining the simplicity of the original PLRNN (\eqref{eq:PLRNN}).
Specifically, the following penalty is added to the loss function (Fig. \ref{fig:matrix_regularization}):
\begin{equation} \label{eq:regularization_penalty}
    \mathrm{L}_{\mathrm{\mathrm{reg}}} = \tau_A \sum_{i=1}^{M_{\mathrm{reg}}}\left(A_{i,i} - 1\right)^2
      + \tau_W \sum_{i=1}^{M_{\mathrm{reg}}}\sum_{\substack{j=1\\ j\neq i}}^M W_{i,j}^2
      + \tau_h \sum_{i=1}^{M_{\mathrm{reg}}}h_{i}^2
\end{equation}
(Recall from sect. \ref{chapter:model_and_preliminaries} that $\mA$ is a diagonal and $\mW$ is an off-diagonal matrix.) While this formulation allows us to trade off, for instance, the tendency toward a manifold attractor ($\mA\rightarrow\mI$, $\vh\rightarrow \bm 0$) vs. the sensitivity to other units' inputs ($\mW\rightarrow \bm 0$), for all experiments performed here a common value, $\tau_A = \tau_W = \tau_h = \tau$, was assumed for the three regularization factors. 
We will refer to $(z_1 \ldots z_{M_{reg}})$ as the regularized (`memory') subsystem, and to $(z_{M_{reg}+1} \ldots z_M)$ as the non-regularized (`computational') subsystem. Note that in the limit $\tau \rightarrow \infty$ exact manifold attractors would be enforced.

\subsection{Theoretical analysis}\label{chapter:theo_ana}
We will now 
establish a tight connection between the PLRNN dynamics and its error gradients. Similar ideas appeared in \cite{Chang2019}, but these authors focused only on fixed point dynamics, while here we will consider the more general case including cycles of any order. First, note that by \textit{interpretability} of model \eqref{eq:PLRNN} we mean that it is easily amenable 
to a rigorous DS analysis: As shown in Suppl. \ref{section:suppl_fixed_points_cycles}, we can explicitly determine all the system's fixed points and cycles and their stability. Moreover, as shown in \cite{Monfared2020}, we can -- under certain conditions -- transform the PLRNN into an equivalent continuous-time (ODE) piecewise-linear system, which brings further advantages for DS analysis. 

Let us rewrite \eqref{eq:PLRNN} in the form
\begin{align}\label{eq-th-1}
 \vz_t &= F(\vz_{t-1}) = (\mA + \mW \mD_{\Omega(t-1)}) \vz_{t-1} + \vh := \mW_{\Omega(t-1)} \, \vz_{t-1} + \vh,
\end{align}
where $\mD_{\Omega(t-1)}$ is the diagonal matrix of outer derivatives of the ReLU function evaluated at $\vz_{t-1}$ (see Suppl. \ref{section:suppl_fixed_points_cycles}), and we ignore external inputs and noise terms for now.
Starting from some initial condition $\vz_1$, we can recursively develop $\vz_T$ as (see Suppl. \ref{section:suppl_fixed_points_cycles} for more details):
\begin{align}\label{eq-th-5}
 \vz_{T} = F^{T-1}( \vz_{1}) & = \prod_{i=1}^{T-1} \mW_{\Omega(T-i)} \, \vz_{1} + \bigg[\sum_{j=2}^{T-1} \, \, \prod_{i=1}^{j-1} \mW_{\Omega(T-i)} + \mI \bigg] \vh.
\end{align}
Likewise, for some common loss function $\mathcal{L}(\mA,\mW,\vh)=\sum_{t=2}^T \mathcal{L}_t$,  we can recursively develop the derivatives w.r.t. weights $w_{mk}$ (and similar for components of $\mA$ and $\vh$) as
\begin{align}\label{eq-th-7-2}
\frac{\partial \mathcal{L}}{\partial w_{mk}} &= \sum_{t=2}^T \frac{\partial \mathcal{L}_t}{\partial \vz_t}\, \frac{\partial \vz_t}{\partial w_{mk}} 
\text{,\hspace{0.5cm}with\hspace{0.1cm}} 
\frac{\partial \vz_t}{\partial w_{mk}}
= \mathbf{1}_{(m,k)} \mD_{\Omega(t-1)}\, \vz_{t-1} 
\\\nonumber & \,\, + \sum_{j=2}^{t-2} \bigg(\prod_{i=1}^{j-1} \mW_{\Omega(t-i)}\bigg)\mathbf{1}_{(m,k)} \mD_{\Omega(t-j)}\vz_{t-j}+ \prod_{i=1}^{t-2} \mW_{\Omega(t-i)}
\frac{\partial \vz_2}{\partial w_{mk}},
\end{align}
where $\mathbf{1}_{(m,k)}$ is an $M \times M$ indicator matrix with a $1$ for the $(m,k)$'th entry and $0$ everywhere else. Observing that eqs. \ref{eq-th-5} and \ref{eq-th-7-2} contain similar product terms which determine the system's long-term behavior, our first theorem links the PLRNN dynamics to its total error gradients:

\vspace{0.3cm}
\begin{theorem}\label{thm-1}
Consider a PLRNN given by \eqref{eq-th-1}, and
assume that it converges to a stable fixed point, say $\vz_{t^{*1}}:=\vz^{*1}$, or a $k$-cycle $(k>1)$ with the periodic points $\{\vz_{t^{*k}},\vz_{t^{*k}-1}, \cdots, \vz_{t^{*k}-(k-1)}\}$, for $T \to \infty$. Suppose that, for $k \geq 1$ and $i \in \{0, 1, \cdots, k-1 \}$, $\sigma_{max}(\mW_{\Omega(t^{*k}-i)})= \norm{\mW_{\Omega(t^{*k}-i)}}<1$, where $\mW_{\Omega(t^{*k}-i)}$ denotes the Jacobian of the system at $\vz_{t^{*k}-i}$ and $\sigma_{max}$ indicates the largest singular value of a matrix. 
 Then, the $2$-norms of the
tensors collecting all derivatives,
$\norm{\frac{\partial \vz_T}{\partial \mW}}_2$, $\norm{\frac{\partial \vz_T}{\partial \mA}}_2$, $\norm{\frac{\partial \vz_T}{\partial \vh}}_2$, will be
bounded from above, i.e. will not diverge for $T \to \infty$.
\end{theorem}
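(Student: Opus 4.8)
The plan is to exploit the structural parallel, already noted around \eqref{eq-th-5} and \eqref{eq-th-7-2}, between the forward iteration and the gradient recursion: both are governed by products of the Jacobian matrices $\mW_{\Omega(t-i)}$, and the hypothesis $\norm{\mW_{\Omega(t^{*k}-i)}}<1$ will force every sufficiently long such product to be geometrically small, which in turn makes each of the (growing) sums in \eqref{eq-th-7-2} convergent uniformly in $T$.

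First I would record the two facts on which everything rests. (i) Since the orbit converges to the fixed point / $k$-cycle, the sequence $(\vz_t)$ is bounded, say $\norm{\vz_t}\le Z$ for all $t$; moreover $\norm{\mD_{\Omega(t)}}\le 1$ and $\norm{\mathbf 1_{(m,k)}}=1$. (ii) There is a finite time $T_0$ after which the orbit visits exactly the same sequence of linear regions as the limiting cycle, so that for $t>T_0$ the matrix $\mW_{\Omega(t-1)}$ equals one of the $k$ Jacobians $\mW_{\Omega(t^{*k})},\dots,\mW_{\Omega(t^{*k}-(k-1))}$, cycling with period $k$ (for a fixed point, $k=1$ and it is eventually constant). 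Fact (ii) is where the only real subtlety lies: it requires the periodic points to lie in the \emph{interior} of their linear regions — implicitly assumed, since otherwise the Jacobians in the statement would not be well defined — so that an open tube around the cycle is contained in the union of those regions, and convergence then guarantees the orbit eventually enters and stays in that tube.

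Given (i)--(ii), set $\rho:=\prod_{i=0}^{k-1}\norm{\mW_{\Omega(t^{*k}-i)}}<1$ and $B:=\max_{t\le T_0}\norm{\mW_{\Omega(t)}}$. By submultiplicativity of the spectral norm, any product of $m$ consecutive post-$T_0$ Jacobians has norm $\le\rho^{\lfloor m/k\rfloor}$ (their order is irrelevant for the bound), and a product that also reaches back before $T_0$ picks up an extra universal factor $\le B^{T_0}$ while still retaining a factor $\rho^{\lfloor (t-T_0)/k\rfloor}$ from its post-$T_0$ part. I would then bound $\partial\vz_T/\partial w_{mk}$ term by term in \eqref{eq-th-7-2}: the lead term is $\le Z$; in the sum over $j$, the terms with $j<T-T_0$ are bounded by $C\,\rho^{\lfloor (j-1)/k\rfloor}$ and sum to at most $Ck/(1-\rho)$, while the remaining (at most $T_0$) terms are each $\le C\,B^{T_0}\rho^{\lfloor (T-1-T_0)/k\rfloor}\to 0$; and the final term $\prod_{i=1}^{T-2}\mW_{\Omega(T-i)}\,\partial\vz_2/\partial w_{mk}$ likewise tends to $0$, being bounded by $B^{T_0}\rho^{\lfloor (T-1-T_0)/k\rfloor}$ times a fixed quantity. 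Hence $\sup_T\norm{\partial\vz_T/\partial w_{mk}}<\infty$.

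Finally, since $\norm{\partial\vz_T/\partial\mW}_2$ is, up to equivalence of norms on the finite-dimensional tensor space, controlled by $\big(\sum_{m,k}\norm{\partial\vz_T/\partial w_{mk}}^2\big)^{1/2}\le M\max_{m,k}\norm{\partial\vz_T/\partial w_{mk}}$, it stays bounded as $T\to\infty$. The arguments for $\mA$ and $\vh$ are identical: their gradient recursions have the same form as \eqref{eq-th-7-2} with $\mathbf 1_{(m,k)}\mD_{\Omega(t-j)}\vz_{t-j}$ replaced by $\mathbf 1_{(m,m)}\vz_{t-j}$ and by the unit vector $\ve_m$ respectively, both bounded, so the same geometric-series estimate applies. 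The main obstacle, to reiterate, is not the norm bookkeeping but cleanly justifying fact (ii) — that a contracting piecewise-linear cycle with interior periodic points absorbs a neighborhood and fixes the switching sequence after finitely many steps.
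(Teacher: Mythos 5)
Your proposal is correct and follows essentially the same route as the paper's proof: both exploit convergence of the orbit to the fixed point or $k$-cycle to show that the Jacobians $\mW_{\Omega(t)}$ eventually coincide with (or have norms arbitrarily close to) the limiting ones, so that long products decay geometrically, and then bound the expansion of $\partial\vz_T/\partial w_{mk}$ term by term via a convergent geometric series, using boundedness of the orbit and $\norm{\mD_{\Omega(t)}}\le 1$. The only cosmetic difference is that you establish the contraction via eventual exact periodicity of the switching sequence (your fact (ii), whose interior-region subtlety you rightly flag), whereas the paper phrases it as $\lim_{T\to\infty}\mW_{\Omega(T-i)}=\mW_{\Omega(t^{*k}-s_i)}$ plus an $\epsilon$-argument on the norms --- equivalent statements given that there are only finitely many possible matrices $\mW_{\Omega}$.
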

\vspace{-0.1cm}
\textit{Proof.} See Suppl. sect. \ref{sup} (subsection \ref{p-thm1}).\hspace{7.2cm} $\square$

While Theorem \ref{thm-1} is a general statement about PLRNN dynamics and total gradients, our next theorem more specifically provides conditions under which Jacobians linking temporally distant states $\vz_T$ and $\vz_t$, $T\gg t$, will neither vanish nor explode in the regularized PLRNN: 

\begin{theorem}\label{thm-2}
Assume a PLRNN with matrix $\mA +\mW$ partitioned as in Fig. \ref{fig:matrix_regularization}, i.e. with the first $M_{reg}$ rows corresponding to those of an $M \times M$ identity matrix. 
Suppose that the non-regularized subsystem $(z_{M_{reg}+1} \ldots z_M)$, if considered in isolation, satisfies Theorem \ref{thm-1}, i.e. converges to a $k$-cycle with $k \geq 1$. 
Then, for the full system $(z_1 \ldots z_M)$, the 2-norm of the 
Jacobians connecting temporally distal states $\vz_T$ and $\vz_t$ will be bounded from above and below for all $T > t$, i.e. $\infty > \rho_{up} \geq \norm{\frac{\partial \vz_T}{\partial \vz_t}}_2 = \norm{\prod_{t<k \leq T} \mW_{\Omega(k)}}_2 \geq \rho_{low} > 0$. In particular, for state variables $z_{iT}$ and $z_{jt}$ such that $i \in \{ M_{reg}+1, \cdots,  M \}$ and $j \in \{1, \cdots, M_{reg} \}$, i.e. that connect states from the ‘memory’ to those of the `computational' subsystem, one also has 
$\infty > \lambda_{up} \geq \Big|\frac{\partial z_{iT}}{\partial z_{jt}}\Big| \geq \lambda_{low} > 0$ as $T-t \to \infty$, i.e. these derivatives will never vanish nor explode.
\end{theorem}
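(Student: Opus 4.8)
\emph{Proof plan.} The plan is to exploit the block structure that the partition of Fig.~\ref{fig:matrix_regularization} imposes on \emph{every} Jacobian $\mW_{\Omega(k)}=\mA+\mW\mD_{\Omega(k)}$. Since $\mA$ is diagonal, $\mW$ off-diagonal, and the first $M_{reg}$ rows of $\mA+\mW$ coincide with those of $\mI$, we have $A_{ii}=1$ and $W_{ij}=0$ for all $i\le M_{reg}$, $j\ne i$; hence, \emph{regardless} of which orthant $\vz_{k-1}$ lies in, the first $M_{reg}$ rows of $\mW_{\Omega(k)}$ are exactly the rows of $\mI$. Writing the $(M-M_{reg})\times M_{reg}$ memory-to-computational block of $\mW_{\Omega(k)}$ as $\mathbf{B}_k$ and its bottom-right $(M-M_{reg})\times(M-M_{reg})$ block --- the Jacobian of the non-regularized subsystem --- as $\mathbf{C}_k$, a short induction on the block product gives
\[
\frac{\partial \vz_T}{\partial \vz_t}=\prod_{t<k\le T}\mW_{\Omega(k)}=\begin{pmatrix}\mI & \mathbf{0}\\ \mathbf{P}^{21}_{T,t} & \mathbf{P}^{22}_{T,t}\end{pmatrix},\qquad \mathbf{P}^{22}_{T,t}=\prod_{t<k\le T}\mathbf{C}_k,\qquad \mathbf{P}^{21}_{T,t}=\sum_{k=t+1}^{T}\Big(\prod_{k<l\le T}\mathbf{C}_l\Big)\mathbf{B}_k .
\]

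First I would read off the lower bound on $\norm{\partial\vz_T/\partial\vz_t}_2$: since this matrix carries an exact $\mI_{M_{reg}}$ in its top-left corner, it sends any vector supported on the first $M_{reg}$ coordinates to a vector of at least equal norm, so its largest singular value is $\ge1$ and we may take $\rho_{low}=1$, uniformly in $T,t$. For the upper bound I would invoke the hypothesis that the non-regularized subsystem --- with the asymptotically constant memory feedback folded into its bias --- satisfies Theorem~\ref{thm-1}: exactly as in the proof of that theorem, after a finite transient its trajectory remains in a fixed cyclic sequence of orthants, so the $\mathbf{C}_k$ become periodic; since each of the $k$ periodic-point sub-Jacobians has largest singular value $<1$, the one-period product has operator norm strictly below $1$, whence $\norm{\prod_{a<l\le b}\mathbf{C}_l}_2\le C\gamma^{\,b-a}$ for some $C<\infty$ and $\gamma<1$ (the finite transient absorbed into $C$). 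Then $\norm{\mathbf{P}^{22}_{T,t}}_2\le C$, and since $\norm{\mathbf{B}_k}_2\le\norm{\mW}_2$ (a submatrix of $\mW\mD_{\Omega(k)}$, with $\norm{\mD_{\Omega(k)}}_2\le1$), $\norm{\mathbf{P}^{21}_{T,t}}_2\le\norm{\mW}_2\sum_{m\ge0}C\gamma^{m}=C\norm{\mW}_2/(1-\gamma)$; the triangle inequality over the three blocks then yields a finite $\rho_{up}$ independent of $T,t$.

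For the cross-derivative with $i\in\{M_{reg}+1,\dots,M\}$ and $j\in\{1,\dots,M_{reg}\}$ --- connecting a memory state to a computational one --- note that $\partial z_{iT}/\partial z_{jt}$ is exactly the $(i-M_{reg},j)$ entry of $\mathbf{P}^{21}_{T,t}$, so $|\partial z_{iT}/\partial z_{jt}|\le\norm{\mathbf{P}^{21}_{T,t}}_2\le\lambda_{up}<\infty$ is immediate from the previous step. For the lower bound I would let $T,t\to\infty$ with $T-t\to\infty$, so that both endpoints sit in the cyclic regime, where $\mathbf{C}_k$ is periodic and $\mathbf{B}_k\to\mathbf{B}_*$ (the memory states drift monotonically, so their ReLU derivatives, and hence the couplings that feed the computational subsystem, freeze). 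The geometric tails make the series defining $\mathbf{P}^{21}_{T,t}$ converge --- in the fixed-point case ($k=1$) to $(\mI-\mathbf{C}_*)^{-1}\mathbf{B}_*$, well defined because $\rho(\mathbf{C}_*)<1$, and for $k>1$ to a cyclic analogue, $(\mI-\Pi)^{-1}$ times a finite one-period sum. Thus $\partial z_{iT}/\partial z_{jt}$ tends to a fixed, finite number: it neither explodes nor, unlike a vanilla RNN, decays to $0$.

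The step I expect to be the main obstacle is establishing that this limit is \emph{nonzero}: for special weight patterns --- e.g.\ $W_{ij}=0$ for all $i>M_{reg}\ge j$, making the memory feed-forward-decoupled from the computational subsystem --- the entry $[(\mI-\mathbf{C}_*)^{-1}\mathbf{B}_*]_{i-M_{reg},\,j}$ vanishes and with it $\partial z_{iT}/\partial z_{jt}$, so strict positivity of $\lambda_{low}$ genuinely requires an explicit non-degeneracy hypothesis --- that the memory subsystem is dynamically coupled into the computational one (equivalently, the relevant entry of the limit matrix is nonzero) --- which I would add to the statement. Granting it, $\infty>\lambda_{up}\ge|\partial z_{iT}/\partial z_{jt}|\ge\lambda_{low}>0$ as $T-t\to\infty$ follows from the convergence above; the robust conclusion is that this distal derivative tends to a finite, generically nonzero constant instead of to $0$ --- precisely the behaviour the regularization is designed to induce.
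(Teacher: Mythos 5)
Your proposal follows essentially the same route as the paper's own proof: the same block lower-triangular decomposition of $\prod_{t<k\le T}\mW_{\Omega(k)}$ (your $\mathbf{B}_k$, $\mathbf{C}_k$, $\mathbf{P}^{21}$, $\mathbf{P}^{22}$ correspond to the paper's $\mS\mD^k_{reg}$, $\mW^k_{nreg}$, $\mQ$ and $\prod\mW^k_{nreg}$), the same lower bound $\rho_{low}=1$ read off from the exact $\mI_{reg}$ block, and the same use of the Theorem~\ref{thm-1} hypothesis on the non-regularized subsystem to make $\mathbf{P}^{22}\to\mathbf{0}$ and to sum the geometric series bounding $\mathbf{P}^{21}$. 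Your flagged obstacle concerning $\lambda_{low}>0$ is well taken: the paper's proof likewise only bounds the norm of the full Jacobian and never establishes that individual entries of the limiting cross block are nonzero (its own remark that $\norm{\mS}=0$ forces $\norm{\partial\vz_T/\partial\vz_t}\to 1$ implicitly concedes the degenerate case), so the non-degeneracy hypothesis you propose adding is a genuine and worthwhile sharpening of the statement rather than a defect of your argument.
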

\vspace{-0.1cm}
\textit{Proof.} See Suppl. sect. \ref{sup} (subsection \ref{p-thm2}). 
\hspace{7.1cm} $\square$

The bounds $\rho_{up}$, $\rho_{low}$, $\lambda_{up}$, $\lambda_{low}$, are given in Suppl. sect. \ref{p-thm2}. We remark that when the regularization conditions are not exactly met, i.e. when parameters $\mA$ and $\mW$ slightly deviate from those in Fig. \ref{fig:matrix_regularization}, memory (and gradients) may ultimately dissipate, but only very slowly, as actually required for temporal processes with very slow yet not infinite time constants (Fig. \ref{fig:line_attractor_config}\textbf B).

\subsection{Training procedures}\label{section:training_procedures}
For the (supervised) machine learning problems, all networks were trained by stochastic gradient descent (SGD) to minimize the squared-error loss between estimated and actual outputs for the addition and multiplication problems, and the cross entropy loss for sequential MNIST (see Suppl. \ref{section:suppl_details_benchmark_model}). Adam \citep{Kingma2014} from PyTorch package \citep{Paszke2017} was used as the optimizer, with a learning rate of 0.001, gradient clip parameter of 10, and batch size of 500. SGD was stopped after 100 epochs and the fit with the lowest loss across all epochs was taken, except for LSTM which was allowed to run for up to 200 epochs as it took longer to converge (Fig. \ref{fig:loss_plot}). For comparability, the PLRNN latent state dynamics \eqref{eq:PLRNN} was assumed to be deterministic in this setting (i.e., $\bm\Sigma=\bm 0$), $g(\vz_t)=\vz_t$ and $\bm\Gamma=\mI_N$ in \eqref{eq:PLRNN_obs}. For the regularized PLRNN (rPLRNN), penalty \eqref{eq:regularization_penalty} was added to the loss function. For the (unsupervised) DS reconstruction problems, the fully probabilistic, generative RNN \eqref{eq:PLRNN} was considered. Together with \eqref{eq:PLRNN_obs} (where we take $g(\vz_t)=\phi(\vz_t$)) this gives the typical form of a nonlinear state space model \citep{Durbin2012} with observation and process noise, and an Expectation-Maximization (EM) algorithm that efficiently exploits the model's piecewise linear structure \citep{Durstewitz2017, Koppe2019} was used to solve for the parameters by maximum likelihood. Details are given in Suppl. \ref{section:supplement_details_em_algorithm}. All code used here will be made openly available at \href{https://github.com/DurstewitzLab/reg-PLRNN}{https://github.com/DurstewitzLab/reg-PLRNN}.

\subsection{Performance measures}\label{section:performance_measures}
For the machine learning benchmarks we employed the same criteria as used for optimization (MSE or cross-entropy, Suppl. \ref{section:suppl_details_benchmark_model}) as performance metrics, evaluated across left-out test sets.
In addition, we report the relative frequency $P_{\mathrm{correct}}$ of correctly predicted trials across the test set (see Suppl. \ref{section:suppl_details_benchmark_model} for details).
For DS reconstruction problems, it is not sufficient or even sensible to judge a method's ability to infer the underlying DS purely based on some form of (ahead-)prediction error like the MSE defined on the time series itself (Ch.12 in \cite{Kantz2004}).
Rather, we require that the inferred model can freely reproduce (when no longer guided by the data) the underlying attractor geometries and state space properties. This is not automatically guaranteed for a model that yields agreeable ahead predictions on a time series (Fig. \ref{fig:S1}\textbf A; cf. \cite{Koppe2019, Wood2010}). We therefore followed \cite{Koppe2019} and used the Kullback-Leibler divergence between true and reproduced probability distributions across states in state space to quantify how well an inferred PLRNN captured the underlying dynamics, thus assessing the agreement in attractor geometries (cf. \cite{Takens1981, Sauer1991}) 
(see Suppl. \ref{section:supplement_details_ds_performance_measure} for more details).

\begin{figure*}[t]
    \center{\includegraphics[trim=56 13 61 5,clip,width=0.98\textwidth]
    {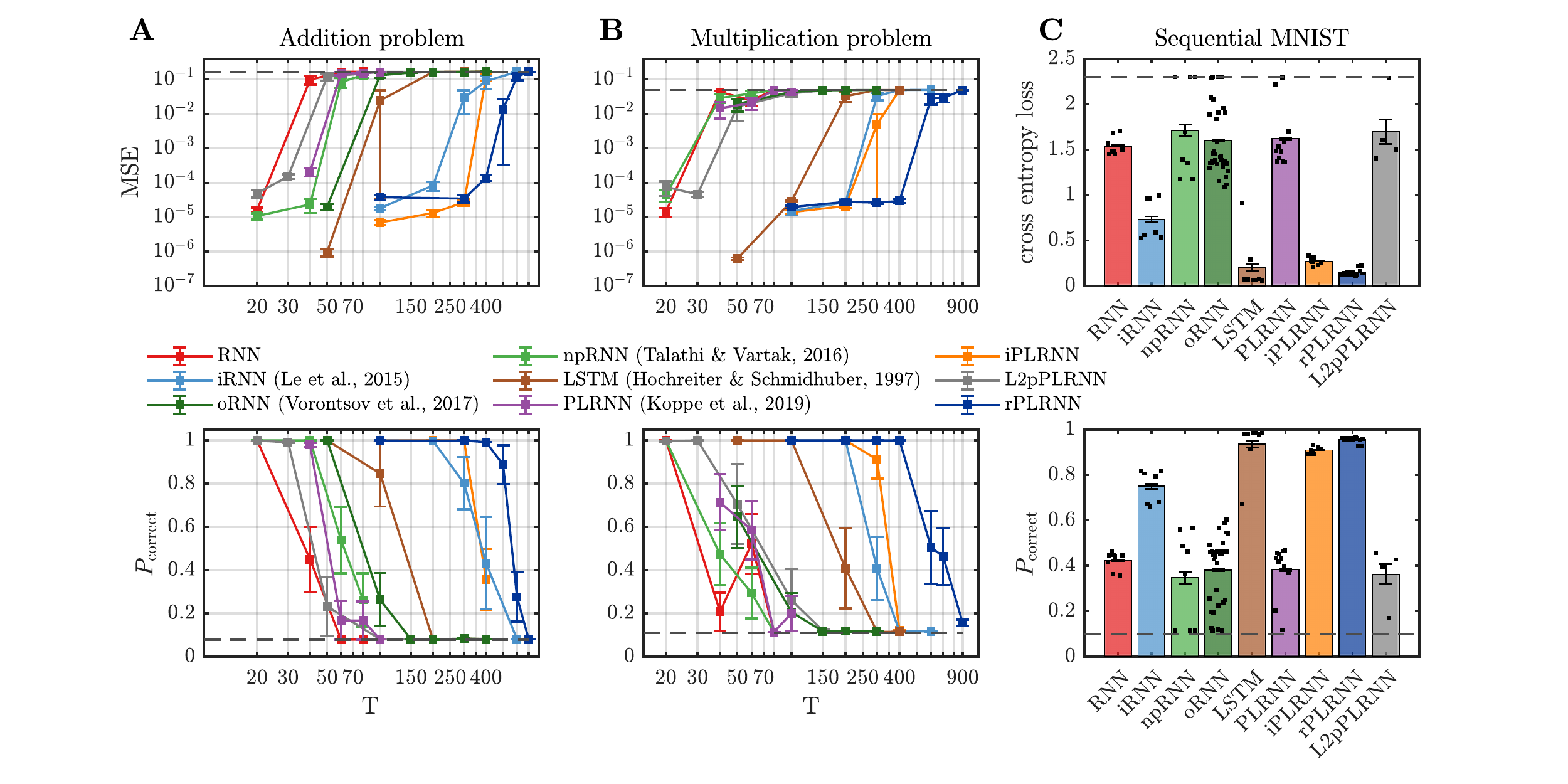}}
    \caption{\label{fig:comparison_all} Comparison of rPLRNN ($\tau=5, \frac{M_{\mathrm{reg}}}{M}=0.5$, cf. Fig. \ref{fig:suppl_find_best_params}) to other methods for \textbf A) addition problem, \textbf B) multiplication problem and \textbf C) sequential MNIST. Top row gives loss as a function of time series length $T$ (error bars = SEM, $n \geq 5$), bottom row shows relative frequency of correct trials. Note that better performance (lower values in top row, higher values in bottom row) 
    is reflected in a more rightward shift of curves. Dashed lines indicate chance level, black dots in \textbf C indicate individual repetitions.
    \vspace{-0.3cm}}
\end{figure*}

\begin{figure*}[t]
    \center{\includegraphics[trim=1cm 9cm 1cm 9cm,clip,width=0.92\textwidth]{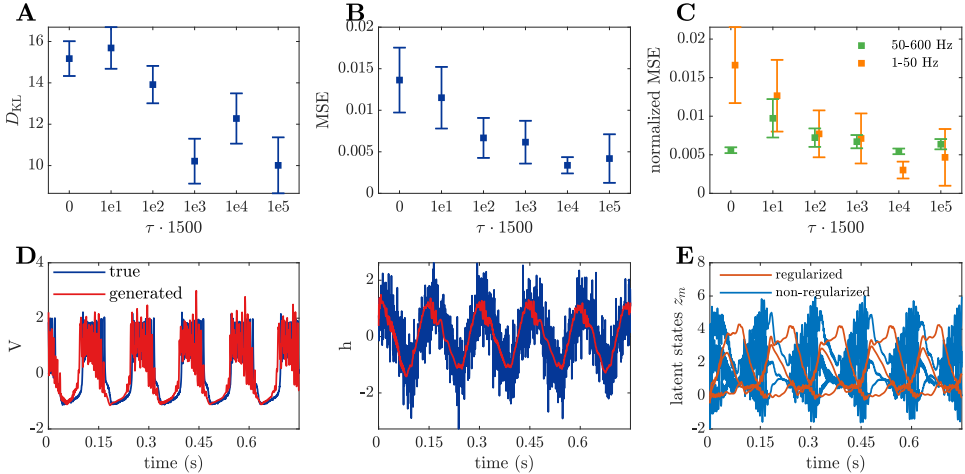}}
    \caption{ Reconstruction of a 2-time scale DS in limit cycle regime. \textbf A) KL divergence ($\displaystyle \KL$) between true and generated state space distributions.  Globally diverging system estimates were removed. \textbf B) Average MSE between power spectra of true and reconstructed DS and \textbf C) 
    split according to low ($\leq\SI{50}{Hz}$) and high ($>\SI{50}{Hz}$) frequency components. Error bars~=~SEM ($n=33$). \textbf D) Example of (best) generated time series (red=reconstruction with $\tau = \frac{2}{3}$). See Fig. \ref{fig:FigS5_revision}\textbf{A} for variable $n$. \textbf E) Dynamics of regularized and non-regularized latent states for the example in \textbf{D}.
    \vspace{-0cm}}
    \label{fig:ds_reconstruction}
\end{figure*}

\section{Numerical experiments}\label{section:numerical_experiments}
\vspace{-0.2cm}
\subsection{Machine learning benchmarks}\label{section:numerical_experiments_ml}
Although not our prime interest here, we first examined how the rPLRNN would fare on supervised machine learning benchmarks where inputs ($\mS$) are to be mapped onto target outputs ($\mX$) across long time spans (i.e., requiring long short-term maintenance of information), namely the \textit{addition} and \textit{multiplication problems} \citep{Talathi2015,Hochreiter1997},
and \textit{sequential MNIST} \citep{MNISTa}. Details of these experimental setups are 
in Suppl. \ref{section:suppl_details_benchmark_model}.
Performance of the rPLRNN (\eqref{eq:PLRNN}, \eqref{eq:regularization_penalty}) on all 3 benchmarks was compared to several other models summarized in Suppl. Table \ref{tab:models}.
To achieve a meaningful comparison, all models have the same number $M=40$ (based on Fig. \ref{fig:suppl_find_best_params}) of hidden states (which gives LSTMs overall about 4 times as many trainable parameters).
On all three problems the rPLRNN outperforms all other tested methods, including LSTM, iRNN (RNN initialized by the identity matrix as in \cite{Le2015}), and a version of the orthogonal RNN (oRNN; \cite{Vorontsov2017}) (similar results were obtained for other settings of $M$ and batch size).
LSTM performs even worse than iRNN and iPLRNN (PLRNN initialized with the identity as the iRNN), although it had 4 times as many parameters and was given twice as many epochs (and thus opportunities) for training, as it also took longer to converge (Fig. \ref{fig:loss_plot}).
In addition, the iPLRNN tends to perform slightly better than the iRNN on all three problems, suggesting that the specific structure \eqref{eq:PLRNN} of the PLRNN that allows for a manifold attractor across the variables' full range may be advantageous to begin with, while the regularization further improves performance.

\subsection{Numerical experiments on dynamical systems with different time scales}\label{section:numerical_experiments_ds}\vspace{-0.1cm}
While it is encouraging that the rPLRNN may perform even better than several previous approaches to the vanishing vs. exploding gradients problem, our major goal here was to examine whether our regularization scheme would help with the (unsupervised) identification of DS that harbor widely different time scales. 
To test this, we used a biophysical, 
bursting cortical neuron model with one voltage ($V$) and two conductance recovery variables (see \cite{Durstewitz2009}), one slow ($h$) and one fast ($n$; Suppl. \ref{section:supplement_neuron_model}).
Reproduction of this DS is challenging since it produces very fast spikes on top of a slow nonlinear oscillation (Fig. \ref{fig:ds_reconstruction}\textbf D). Only short 
time series (as in scientific data) of length $T=1500$ from this model were provided for training.
rPLRNNs with $M=\{8\ldots18\}$ states were trained, with the regularization factor varied within $\tau \in \{0, 10^1, 10^2, 10^3, 10^4, 10^5\}/T$. Note that for $\tau=0$ (no regularization), the approach reduces to the standard PLRNN \citep{Koppe2019}.

Fig. \ref{fig:ds_reconstruction}\textbf A confirms our intuition that stronger regularization leads to better DS reconstruction as assessed by the KL divergence between true and generated state distributions (similar results were obtained with ahead-prediction errors as a metric, Fig. \ref{fig:S4}\textbf A), accompanied by a likewise decrease in the MSE between the power spectra of true (suppl. \eqref{eq:neuron_model}) and generated (rPLRNN) voltage traces (Fig. \ref{fig:ds_reconstruction}\textbf B).
Fig. \ref{fig:ds_reconstruction}\textbf D gives an example of voltage traces ($V$) and the slower of the two gating variables ($h$; see Fig. \ref{fig:FigS5_revision}\textbf{A} for variable $n$) freely simulated (i.e., sampled) from the autonomously running rPLRNN.
This illustrates that our model is in principle capable of capturing both the stiff spike dynamics and the slower oscillations in the second gating variable at the same time.
Fig. \ref{fig:ds_reconstruction}\textbf C provides more insight into how the regularization worked:
While the high frequency components ($>\SI{50}{Hz}$) related to the repetitive spiking activity hardly benefited from increasing $\tau$, there was a strong reduction in the MSE computed on the power spectrum for the lower frequency range ($\leq\SI{50}{Hz}$), suggesting that increased regularization helps to map slowly evolving components of the dynamics. This result is more general as shown in Fig. \ref{fig:results_ecg_signal} for another DS example. 
In contrast, 
an orthogonality \citep{Vorontsov2017} or plain L2 constraint on weight matrices did not help at all on this problem (Fig. \ref{fig:S4}\textbf B).

Further insight into the dynamical mechanisms by which the rPLRNN solves the problem can be obtained by examining the latent dynamics: As shown in Fig. \ref{fig:ds_reconstruction}\textbf{E} (see also Fig. \ref{fig:FigS5_revision}), regularized states indeed help to map the slow components of the dynamics, while non-regularized states focus on the fast spikes. These observations further corroborate the findings in Fig. \ref{fig:ds_reconstruction}\textbf{C} and Fig. \ref{fig:results_ecg_signal}\textbf{C}.

\subsection{Regularization properties and manifold attractors}\label{section:manifold_attractors}
In Figs. \ref{fig:comparison_all} and \ref{fig:ds_reconstruction} we demonstrated that the rPLRNN is able to solve problems and reconstruct dynamics that involve long-range dependencies. Figs. \ref{fig:ds_reconstruction}\textbf{A},\textbf{B} furthermore directly confirm that solutions improve with stronger regularization, while Figs. \ref{fig:ds_reconstruction}\textbf{C},\textbf{E} give insight into the mechanism by which the regularization works.
To further verify empirically that our specific form of regularization, eq. \ref{eq:regularization_penalty}, is important, Fig. \ref{fig:comparison_all} also shows results for a PLRNN with standard L2 norm on a fraction of $M_{reg}/M=0.5$ states (L2pPLRNN).
Fig. \ref{fig:suppl_comparison_l2fplrnn} provides additional results for PLRNNs with L2 norm on all weights and for vanilla L2-regularized RNNs.
All these systems fell far behind the performance of the rPLRNN on all tasks tested.
Moreover, Fig. \ref{fig:figmainnew} reveals that the specific regularization proposed indeed encourages manifold attractors, and that this is not achieved by a standard L2 regularization: In contrast to L2PLRNN, as the regularization factor $\tau$ is increased, more and more of the maximum absolute eigenvalues around the system’s fixed points (computed according to eq. \ref{eq-2}, sect. \ref{section:suppl_fixed_points_cycles}) cluster on or near 1, indicating directions of marginal stability in state space. Also, the deviations from 1 become smaller for strongly regularized PLRNNs (Fig. \ref{fig:figmainnew}\textbf{B,D}), indicating a higher precision in attractor tuning. Fig. \ref{fig:S9} in addition confirms that rPLRNN parameters are increasingly driven toward values that would support manifold attractors with stronger regularization.
Fig. \ref{fig:ds_reconstruction}\textbf{E} furthermore suggests that both regularized and non-regularized states are utilized to map the full dynamics. But how should the ratio $M_{\mathrm{reg}}/M$ be chosen in practice?
While for the problems here this meta-parameter was determined through `classical' grid-search and cross-validation, Figs. \ref{fig:suppl_find_best_params} \textbf{C} -- \textbf{E} suggest that the precise setting of $M_{\mathrm{reg}}/M$ is actually not overly important: 
Nearly optimal performance is achieved for a broader range $M_{\mathrm{reg}}/M \in [0.3, 0.6]$ on all problems tested. Hence, in practice, setting $M_{\mathrm{reg}}/M=0.5$ should mostly work fine.
\begin{figure*}[t]
\center{\includegraphics[trim=0 0 0 0 ,clip,width=\textwidth]{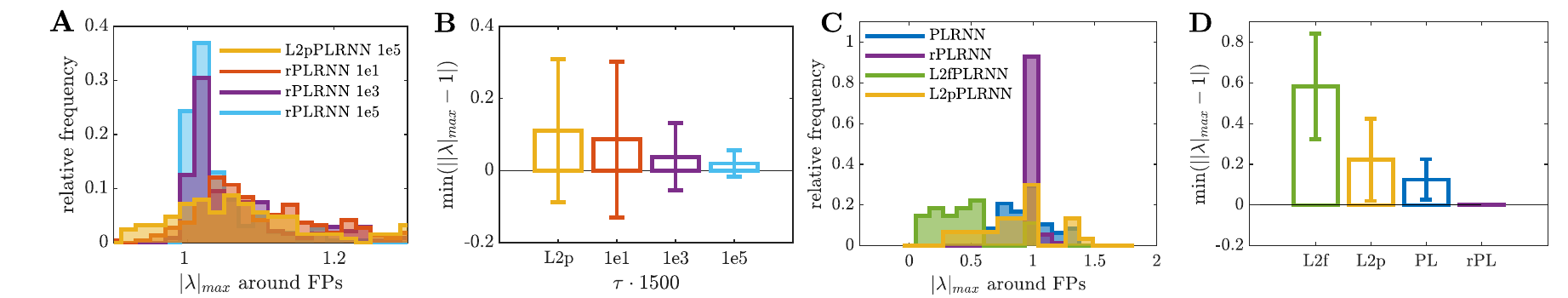}}
\caption{\textbf A) Distribution of maximum absolute eigenvalues $\lambda$ of Jacobians around fixed points for rPLRNN for different $\tau$ and L2PLRNN trained on bursting neuron DS. \textbf B) Absolute deviations of max. $|\lambda|$ from 1 (using for each system the one eigenvalue with smallest deviation). \textbf{C)} Same as \textbf{A} for addition problem for rPLRNN ($\tau=5$) vs. standard, fully L2- (L2f), and partially L2 (L2p)-regularized PLRNN. \textbf{D)} Same as \textbf B for the models from \textbf C. Error bars = stdv. See also Fig. \ref{fig:jacobian_multiplication_mnist}.\vspace{-0.3cm}}
\label{fig:figmainnew}
\end{figure*}
\section{Conclusions}
In this work we introduced a simple solution to the long short-term memory problem in RNNs that retains the simplicity and tractability of PLRNNs, yet does not curtail their universal computational capabilities \citep{Koiran1994, Siegelmann1995} and their ability to approximate arbitrary DS \citep{Funahashi1993, Kimura1998, Trischler2016}.
We achieved this by adding regularization terms to the loss function that encourage the system to form a `memory subspace' \citep{Seung1996, Durstewitz2003} which would store arbitrary values for, if unperturbed, arbitrarily long periods. At the same time we did not rigorously enforce this constraint, which allowed the system to capture slow time scales by slightly departing from a perfect manifold attractor. In neuroscience, this has been discussed as a dynamical mechanism for regulating the speed of flow in DS and learning of arbitrary time constants not naturally included qua RNN design \citep{Durstewitz2003, Durstewitz2004} (Fig. \ref{fig:line_attractor_config}\textbf B).
While other RNN architectures, including vanilla RNNs, can, in principle, also develop line attractors to solve specific tasks \citep{Maheswaranathan2019}, they are generally much harder to train to achieve this and may exhibit less precise attractor tuning (cf. Fig. \ref{fig:figmainnew}), which is needed to bridge long time scales \citep{Durstewitz2003}. 
Moreover, part of the PLRNN's latent space was not regularized at all, leaving the system enough degrees of freedom for realizing arbitrary computations or dynamics (see also Fig. \ref{fig:Sx_Lorenz3} for a chaotic example).
We showed that the rPLRNN is en par with or outperforms initialization-based approaches, orthogonal RNNs, and LSTMs on a number of classical benchmarks.
More importantly, however, the regularization strongly facilitates the identification of challenging DS with widely different time scales in PLRNN-based algorithms for DS reconstruction.
Similar regularization schemes as proposed here (eq. \ref{eq:regularization_penalty}) may, in principle, also be designed for other architectures, but the convenient mathematical form of the PLRNN makes their implementation particularly powerful and straightforward.

\ificlrfinal
\section*{Acknowledgements}
This work was funded by grants from the German Research Foundation (DFG) to DD (Du 354/10-1, Du 354/8-2 within SPP 1665) and to GK (TRR265: A06 \& B08), and under Germany's Excellence Strategy – EXC-2181 – 390900948 ('Structures').
\else\fi


\bibliography{bibliography}
\bibliographystyle{iclr2021_conference}

\newpage
\renewcommand{\thefigure}{S\arabic{figure}}
\setcounter{figure}{0}
\section{Appendix}
\subsection{Supplementary text}\label{sup}
\subsubsection{\textbf{\textit{Simple exact PLRNN solution for addition problem}}}\label{section:supplement_2d_solution}
The exact PLRNN parameter settings (cf. \eqref{eq:PLRNN}, \eqref{eq:PLRNN_obs}) for solving the addition problem with 2 units (cf. Fig. \ref{fig:line_attractor_config}\textbf C) are as follows: 
\begin{align}
    \mA &= \begin{pmatrix}1 & 0 \\ 0 & 0\end{pmatrix},
    \mW = \begin{pmatrix}0 & 1 \\ 0 & 0\end{pmatrix},
    \vh = \begin{pmatrix}0 \\ -1\end{pmatrix},
    \mC = \begin{pmatrix}0 & 0 \\ 1 & 1\end{pmatrix},
    \mB = \begin{pmatrix}1 & 0\end{pmatrix}
\end{align}

\subsubsection{\textit{\textbf{Computation of fixed points and cycles in PLRNN}}}\label{section:suppl_fixed_points_cycles}
Consider the PLRNN in the form of \eqref{eq-th-1}. For clarity, let us define $\vd_{\Omega(t)} \coloneqq \left(d_1, d_2, \cdots, d_M \right)$ as an indicator vector with $d_m(z_{m,t}):= d_m = 1$ for all states $z_{m,t}>0$ and zeros otherwise, and $\mD_{\Omega(t)} \coloneqq \mathrm{diag}(\vd_{\Omega(t)})$ as the diagonal matrix formed from this vector. Note that there are at most $2^M$ distinct matrices $\mW_{\Omega(t)}$ as defined in \eqref{eq-th-1}, depending on the sign of the components of $\vz_ {t}$.\\

If $\vh=\mathbf{0}$ and $\mW_{\Omega(t)}$ is the identity matrix, then the map $F$ becomes the identity map and so every point $\vz$ will be a fixed point of $F$. Otherwise, the fixed points of $F$ can be found solving the equation $F( \vz^{*1}) =  \vz^{*1}$ as 
\begin{align}\label{eq-2}
  \vz^{*1} = (\mI- \mW_{\Omega(t^{*1})})^{-1}\, \vh = \mH^{*1}\, \vh,
\end{align}
where $\vz^{*1}=\vz_{t^{*1}}=\vz_{t^{*1}-1}$, if $\det(\mI- \mW_{\Omega(t^{*1})}) = P_{\mW_{\Omega(t^{*1})}}(1) \neq 0$, i.e. $\mW_{\Omega(t^{*1})}$ has no eigenvalue equal to $1$. Stability and type of fixed points (node, saddle, spiral) can then be determined from the eigenvalues of the Jacobian $\mA + \mW \mD_{\Omega(t^{*1})} = \mW_{\Omega(t^{*1})}$ (\cite{Strogatz}).\\

For $k>1$, solving $F^{k}(\vz^{*k}) = \vz^{*k}$, one can obtain a $k$-cycle of the map $F$ with the periodic points $\{\vz^{*k}, F(\vz^{*k}), F^2(\vz^{*k}), \cdots, F^{k-1}(\vz^{*k})\}$. For this, we first compute $F^k$ as follows:
\begin{align}\nonumber
&\vz_t = F( \vz_{t-1}) = \mW_{\Omega(t-1)} \, \vz_{t-1} +\vh,
 \\[1ex]\nonumber
&\vz_{t+1}= F^2( \vz_{t-1}) = F( \vz_{t})= \mW_{\Omega(t)} \,\mW_{\Omega(t-1)} \, \vz_{t-1} +\big(\mW_{\Omega(t)} + \mI \big)\vh,
 \\[1ex]\nonumber
&\vz_{t+2} = F^3( \vz_{t-1}) = F( \vz_{t+1})= \mW_{\Omega(t+1)} \, \mW_{\Omega(t)} \,\mW_{\Omega(t-1)} \, \vz_{t-1} 
\\[1ex]\nonumber & \hspace{4.3cm} +\big(\mW_{\Omega(t+1)} \,\mW_{\Omega(t)} + \mW_{\Omega(t+1)} + \mI \big)\vh,
 \\[1ex]\nonumber
& \vdots
 \\[1ex]\label{eq-3}
& \vz_{t+(k-1)}= F^k( \vz_{t-1}) = \prod_{i=2}^{k+1} \mW_{\Omega(t+(k-i))} \, \vz_{t-1} + \bigg[\sum_{j=2}^{k} \prod_{i=2}^{k-j+2} \mW_{\Omega(t+(k-i))} +\mI \bigg] \vh,
\end{align}
in which $\prod_{i=2}^{k+1} \mW_{\Omega(t+(k-i))} = \mW_{\Omega(t+(k-2))} \, \mW_{\Omega(t+(k-3))}\, \cdots \, \mW_{\Omega(t-1)}$. \\
Assuming $t+(k-1):=t^{*k}$, then the $k$-cycle is given by the fixed point of the $k$-times iterated map $F^k$ as 
\begin{align}\label{eq-4}
 \vz^{*k} = \bigg(\mI- \prod_{i=1}^{k} \mW_{\Omega(t^{*k}-i)} \bigg)^{-1}\, \bigg[\sum_{j=2}^{k} \prod_{i=1}^{k-j+1} \mW_{\Omega(t^{*k}-i)} + \mI \bigg] \vh = \mH^{*k}\, \vh,
\end{align}
where $\vz^{*k}=\vz_{t^{*k}}=\vz_{t^{*k}-k}$, provided that $\mI- \prod_{i=1}^{k} \mW_{\Omega(t^{*k}-i)}$ is invertible. That is   $\det \bigg(\mI-\prod_{i=1}^{k} \mW_{\Omega(t^{*k}-i)} \bigg)= P_{\prod_{i=1}^{k} \mW_{\Omega(t^{*k}-i)}}(1) \neq 0$ and $\prod_{i=1}^{k} \mW_{\Omega(t^{*k}-i)}:= \mW_{\Omega^{*k}}$ has no eigenvalue equal to $1$. 
As for the fixed points, we can determine stability of the $k$-cycle from the eigenvalues of the Jacobians
$\prod_{i=1}^{k} \mW_{\Omega(t^{*k}-i)}$.\\

It may also be helpful to spell out the recursions in \eqref{eq-th-5} and \eqref{eq-th-7-2} in section \ref{chapter:theo_ana} in a bit more detail. Analogously to the derivations above, for $t=1, 2, \ldots, T$ we can recursively compute $\vz_{2}, \vz_{3}, \ldots, \vz_{T}$ ($T \in \mathbb{N})$ as
\begin{align}\nonumber
& \vz_2 = F( \vz_{1}) = \mW_{\Omega(1)} \, \vz_{1} + \vh,
 \\[1ex]\nonumber
& \vz_{3} = F^2(\vz_{1}) = F( \vz_{2})= \mW_{\Omega(2)} \, \mW_{\Omega(1)} \, \vz_{1} +\big(\mW_{\Omega(2)} + \mI \big) \vh,
 \\[1ex]\nonumber
& \vdots
 \\[1ex]\nonumber
& \vz_{T} = F^{T-1}( \vz_{1}) = F( \vz_{T-1}) =  \mW_{\Omega(T-1)} \, \mW_{\Omega(T-2)} \cdots \mW_{\Omega(1)} \, \vz_{1} \\[1ex]\nonumber
& \hspace{4.5cm} +\big(\mW_{\Omega(T-1)} \, \mW_{\Omega(T-2)} \cdots \mW_{\Omega(2)}  
 \\[1ex]\nonumber
& \hspace{4.5cm}+ \mW_{\Omega(T-1)} \, \mW_{\Omega(T-2)} \cdots \mW_{\Omega(3)} +\, \cdots + \mW_{\Omega(T-1)} + \mI \big) \vh 
\\[1ex]\nonumber
& = \prod_{i=1}^{T-1} \mW_{\Omega(T-i)} \, \vz_{1} + \bigg[\sum_{j=1}^{T-2} \, \, \prod_{i=1}^{T-j-1} \mW_{\Omega(T-i)} + \mI \bigg] \vh
 \\[1ex]\label{eq-5}
& = \prod_{i=1}^{T-1} \mW_{\Omega(T-i)} \, \vz_{1} + \bigg[\sum_{j=2}^{T-1} \, \, \prod_{i=1}^{j-1} \mW_{\Omega(T-i)} + \mI \bigg] \vh.
\end{align}

Likewise, we can write out the derivatives \eqref{eq-th-7-2} more explicitly as
\begin{align}\nonumber
\frac{\partial \vz_t}{\partial w_{mk}} & =\frac{\partial F(\vz_{t-1})}{\partial w_{mk}} = \mathbf{1}_{(m,k)} \mD_{\Omega(t-1)}\, \vz_{t-1}+ \big(\mA + \mW \mD_{\Omega(t-1)}\big)\frac{\partial \vz_{t-1}}{\partial w_{mk}}
 \\[1ex]\nonumber
&
 = \mathbf{1}_{(m,k)} \mD_{\Omega(t-1)}\, \vz_{t-1}+ \big(\mA + \mW \mD_{\Omega(t-1)}\big) \mathbf{1}_{(m,k)} \mD_{\Omega(t-2)}\, \vz_{t-2} 
 \\[1ex]\nonumber
 & \hspace{0.2cm}+ \big(\mA + \mW \mD_{\Omega(t-1)}\big)\big(\mA + \mW \mD_{\Omega(t-2)}\big) \frac{\partial \vz_{t-2}}{\partial w_{mk}}
 \\[1ex]\nonumber
& 
 = \mathbf{1}_{(m,k)} \mD_{\Omega(t-1)} \, \vz_{t-1}+ \big(\mA + \mW \mD_{\Omega(t-1)}\big)\mathbf{1}_{(m,k)}\mD_{\Omega(t-2)} \vz_{t-2}
 \\[1ex] \nonumber
 & \hspace{0.2cm}+ \big(\mA + \mW \mD_{\Omega(t-1)}\big)\big(\mA + \mW \mD_{\Omega(t-2)}\big) \mathbf{1}_{(m,k)}\mD_{\Omega(t-3)} \vz_{t-3}
 \\[1ex]\nonumber
& \hspace{0.2cm}+\big(\mA + \mW \mD_{\Omega(t-1)}\big)\big(\mA + \mW \mD_{\Omega(t-2)}\big)\big(\mA + \mW \mD_{\Omega(t-3)}\big) \frac{\partial \vz_{t-3}}{\partial w_{mk}} \,
\\[1ex]\nonumber& = \, \cdots \, 
 \\[1ex]\nonumber
& = \mathbf{1}_{(m,k)} \mD_{\Omega(t-1)}\, \vz_{t-1} + \sum_{j=2}^{t-2} \, \, \bigg(\prod_{i=1}^{j-1} \mW_{\Omega(t-i)}\bigg) \, \mathbf{1}_{(m,k)} \mD_{\Omega(t-j)} \, \vz_{t-j}
\\\label{eq-8}&\hspace{0.2cm}+ \prod_{i=1}^{t-2} \mW_{\Omega(t-i)} \, \, 
\frac{\partial \vz_2}{\partial w_{mk}}
\end{align}
%
 %
 %
 %
 %
 %
%
 %
where $\frac{\partial \vz_2}{\partial w_{mk}}=(\frac{\partial z_{1,2}}{\partial w_{mk}} \cdots \frac{\partial z_{M,2}}{\partial w_{mk}})$ with $\frac{\partial z_{l,2}}{\partial w_{mk}}=0\, \forall \, l \neq m$ and $\frac{\partial z_{m,2}}{\partial w_{mk}}=d_k z_{k,1}$. The derivatives w.r.t. the elements of $\mA$ and $\vh$ can be expanded in a similar way, only that the terms $\mD_{\Omega(t)}\, \vz_{t}$ on the last line of \eqref{eq-8} need to be replaced by just $\vz_{t}$ for $\frac{\partial \vz_t}{\partial a_{mm}}$, and by just a vector of $1$'s for $\frac{\partial \vz_t}{\partial h_m}$ (also, in these cases, the indicator matrix will be the diagonal matrix $\mathbf{1}_{(m,m)}$).  

\subsubsection{\textit{\textbf{Proof of Theorem \ref{thm-1}}}}\label{p-thm1}
To state the proof, let us rewrite the derivatives of the loss function $\mathcal{L}(\mW,\mA,\vh)=\sum_{t=1}^T \mathcal{L}_t$ 
in the following tensor form: 
\begin{align}\label{eq-th-7-1}
\frac{\partial \mathcal{L}}{\partial \mW}=\sum_{t=1}^T \frac{\partial \mathcal{L}_t}{\partial \mW}, \hspace{.5cm} \text{where} \hspace{.5cm}
\frac{\partial \mathcal{L}_t}{\partial \mW} = \frac{\partial \mathcal{L}_t}{\partial \vz_t} \, \frac{\partial \vz_t}{\partial \mW}, 
\end{align}
for which the $3$D tensor 
\begin{align}\label{eq-ten}
\frac{\partial \vz_t}{\partial \mW} \, = \, 
\begin{pmatrix}
\frac{\partial z_{1,t}}{\partial \mW} \\[1ex]
\frac{\partial z_{2,t}}{\partial \mW}
\\[1ex]
\vdots
\\[1ex]
\frac{\partial z_{M,t}}{\partial \mW}
\end{pmatrix}
\end{align}
of dimension $M \times M \times M$, consists of all the gradient matrices 

\begin{align}\label{eq-ger}
\frac{\partial z_{i,t}}{\partial \mW}
\, = \, 
\begin{pmatrix}
\frac{\partial z_{i,t}}{\partial w_{11}} & \frac{\partial z_{i,t}}{\partial w_{12}} & \cdots & \frac{\partial z_{i,t}}{\partial w_{1M}} \\[1ex]
\frac{\partial z_{i,t}}{\partial w_{21}} & \frac{\partial z_{i,t}}{\partial w_{22}} & \cdots & \frac{\partial z_{i,t}}{\partial w_{2M}}
\\[1ex]
\vdots
\\[1ex]
\frac{\partial z_{i,t}}{\partial w_{M1}} & \frac{\partial z_{i,t}}{\partial w_{M2}} & \cdots & \frac{\partial z_{i,t}}{\partial w_{MM}}
\end{pmatrix}
\ := \,
\begin{pmatrix}
\frac{\partial z_{i,t}}{\partial \vw_{1*}}  \\[1ex]
\frac{\partial z_{i,t}}{\partial \vw_{2*}} 
\\[1ex]
\vdots
\\[1ex]
\frac{\partial z_{i,t}}{\partial \vw_{M*}} 
\end{pmatrix}
, \hspace{1cm} i=1, 2, \cdots, M,
\end{align}
where $\vw_{i*} \in \sR^{M}$ is a row-vector. 

Now, suppose that $\{\vz_1, \vz_2, \vz_3 , \ldots \}$ is an orbit of the system  which converges to a stable fixed point, i.e. $\displaystyle{\lim_{T \to \infty} \vz_T} = \vz^{*k}$. 
Then
\begin{align}\label{}
\displaystyle{\lim_{T \to \infty} \vz_T} \, = \, \displaystyle{\lim_{T \to \infty} \big(\mW_{\Omega(T-1)} \, \vz_{T-1} +\vh \big)} \, = \, \vz^{*1} \, = \, \mW_{\Omega(t^{*1})}\,  \vz^{*1} + \vh,  
\end{align}
and so
\begin{align}\label{lim}
\displaystyle{\lim_{T \to \infty} \big(\mW_{\Omega(T-1)} \big) \, \vz^{*1}} \, = \, \mW_{\Omega(t^{*1})}\,  \vz^{*1}.
\end{align}
Assume that $\displaystyle{\lim_{T \to \infty} \big(\mW_{\Omega(T-1)} \big)}=\mL$. Since \eqref{lim} holds for every $\vz^{*1}$, then substituting $\vz^{*1}=\ve_1^{T}=(1, 0, \cdots , 0)^T$ in \eqref{lim}, we can prove that the first column of $\mL$ equals the first column of $ \mW_{\Omega(t^{*1})}$. Performing the same procedure for $\vz^{*1}=\ve_i^{T}$, $i=2, 3, \cdots, M$, yields
\begin{align}\label{lim-2}
\displaystyle{\lim_{T \to \infty} \mW_{\Omega(T-1)} } \, = \, \mW_{\Omega(t^{*1})}.
\end{align}
Also, for every $i \in \mathbb{N} \, (1<i< \infty)$
\begin{align}\label{}
\displaystyle{\lim_{T \to \infty} \mW_{\Omega(T-i)} } \, = \, \mW_{\Omega(t^{*1})},    
\end{align}
i.e.
\begin{align}\label{}
\forall \epsilon>0 \, \, \, \, \, \exists N \in \mathbb{N}\, \, \, \, s.t. \, \, \, \,  T-i \geq N \Longrightarrow \norm{\mW_{\Omega(T-i)} - \mW_{\Omega(t^{*1})}} \leq \epsilon. 
\end{align}
Thus, $\norm{\mW_{\Omega(T-i)}}-\norm{\mW_{\Omega(t^{*1})}} \leq \norm{\mW_{\Omega(T-i)} -\mW_{\Omega(t^{*1})}}$ gives
\begin{align}\label{}
\forall \epsilon>0 \, \, \, \, \, \exists N \in \mathbb{N}\, \, \, \, s.t. \, \, \, \,  T-i \geq N \Longrightarrow \norm{\mW_{\Omega(T-i)}} \leq \norm{\mW_{\Omega(t^{*1})}}+ \epsilon. 
\end{align}
Since $T-1>T-2> \cdots > T-i \geq N$, so 
\begin{align}\label{}
\forall \epsilon>0 \, \, \, \, \,  \norm{\mW_{\Omega(T-i)}} \leq \norm{\mW_{\Omega(t^{*1})}}+ \epsilon, \, \, \, i=1,2, \cdots, T-N. 
\end{align}
Hence
\begin{align}\label{n-1}
 \forall \epsilon>0 \, \, \, \, \, \norm{\prod_{i=1}^{T-N} \mW_{\Omega(T-i)}}\leq  \prod_{i=1}^{T-N} \norm{\mW_{\Omega(T-i)}}\leq  \Big( \norm{\mW_{\Omega(t^{*1})}}+ \epsilon \Big)^{T-N}.   
\end{align}
If $\norm{\mW_{\Omega(t^{*1})}} < 1$, then for any $\epsilon <1$, considering $\bar{\epsilon} \leq \frac{\epsilon +\norm{\mW_{\Omega(t^{*1})}}}{2} <1$, it is concluded that
\begin{align}\label{eq-co}
\norm{\displaystyle{\lim_{T \to \infty} \prod_{i=1}^{T-N} \mW_{\Omega(T-i)}}} = \displaystyle{\lim_{T \to \infty} \norm{\prod_{i=1}^{T-N} \mW_{\Omega(T-i)}}} \leq \displaystyle{\lim_{T \to \infty} \Big( \norm{\mW_{\Omega(t^{*1})}}+ \bar{\epsilon} \Big)^{T-N}} = 0.    
\end{align}
Therefore
\begin{align}\label{eq-0}
\displaystyle{\lim_{T \to \infty} \prod_{i=1}^{T-1} \mW_{\Omega(T-i)}} = 0.    
\end{align}
If the orbit $\{\vz_1, \vz_2, \vz_3 , \ldots \}$ tends to a stable $k$-cycle $(k>1)$ with the periodic points 
$$\{F^{k}(\vz^{*k}), F^{k-1}(\vz^{*k}), F^{k-2}(\vz^{*k}), \cdots, F(\vz^{*k})\} = \{\vz_{t^{*k}},\vz_{t^{*k}-1}, \cdots, \vz_{t^{*k}-(k-1)}\},$$ 
then, denoting the stable $k$-cycle by
\begin{align}
\Gamma_k\, = \, \{\vz_{t^{*k}},\vz_{t^{*k}-1}, \cdots, \vz_{t^{*k}-(k-1)}, \vz_{t^{*k}},\vz_{t^{*k}-1}, \cdots, \vz_{t^{*k}-(k-1)}, \cdots \},    
\end{align}
we have
\begin{align}
\displaystyle{\lim_{T \to \infty} d(\vz_T, \Gamma_k}) \, = \, 0.    
\end{align}
Hence, there exists a neighborhood $U$ of $\Gamma_k$ and $k$ sub-sequences
$\{\vz_{T_{kn}}\}_{n=1}^{\infty}, \{\vz_{T_{kn+1}}\}_{n=1}^{\infty}$, $\cdots$, $\{\vz_{T_{kn+(k-1)}}\}_{n=1}^{\infty}$ of the sequence $\{\vz_T \}_{T=1}^{\infty}$ such that these sub-sequences belong to $U$ and 
\begin{itemize}
    \item[(i)] $\vz_{T_{kn+s}}\, = \, F^{k} (\vz_{T_{k(n-1)+s}}), s=0, 1, 2, \cdots, k-1$,\\
    \item[(ii)]$\displaystyle{\lim_{T \to \infty} \vz_{T_{kn+s}} = \vz_{t^{*k}-s}},s=0, 1, 2, \cdots, k-1$, \\
    \item[(iii)]for every $\vz_{T}\in U$ there is some $s \in \{0, 1, 2, \cdots, k-1\}$ such that $\vz_{T} \in \{\vz_{T_{kn+s}}\}_{n=1}^{\infty}$.
\end{itemize}
In this case, for every  $\vz_{T}\in U$ with  $\vz_{T} \in \{\vz_{T_{kn+s}}\}_{n=1}^{\infty}$ we have $\displaystyle{\lim_{T \to \infty} \vz_{T} = \vz_{t^{*k}-s}}$ for some $s=0, 1, 2, \cdots, k-1$. Therefore, continuity of $F$ implies that $\displaystyle{\lim_{T \to \infty} F(\vz_{T}) = F(\vz_{t^{*k}-s})}$ and so 
\begin{align}\label{}
 \displaystyle{\lim_{T \to \infty} \big(\mW_{\Omega(T)} \, \vz_{T} +\vh \big)} \, = \, \mW_{\Omega(t^{*k}-s)} \, \,  \vz_{t^{*k}-s} + \vh. 
\end{align}
Thus, similarly, we can prove that 
\begin{align}\label{}
\, \, \, \, \, \exists\, s \in \{0, 1, 2, \cdots, k-1\}\, \, \, \, s.t. \, \, \, \, \displaystyle{\lim_{T \to \infty} \mW_{\Omega(T)} } \, = \, \mW_{\Omega(t^{*k}-s)}.
\end{align}
Analogously, for every $i \in \mathbb{N} \, (1<i< \infty)$
\begin{align}\label{}
\, \, \, \, \, \exists\, s_i \in \{0, 1, 2, \cdots, k-1\}\, \, \, \, s.t. \, \, \, \, \displaystyle{\lim_{T \to \infty} \mW_{\Omega(T-i)} } \, = \, \mW_{\Omega(t^{*k}-s_{i})},    
\end{align}
On the other hand,  $\norm{\mW_{\Omega(t^{*k}-s_{i})}} <1$ for all $s_i \in \{0, 1, 2, \cdots, k-1\}$. So, without loss of generality, assuming
\begin{align}
  \max_{0 \leq s_i \leq k-1} \Big\{ \norm{\mW_{\Omega(t^{*k}-s_{i})}} \Big\} = \norm{\mW_{\Omega(t^{*k})}} <1,  
\end{align}
we can again obtain some relations similar to \eqref{n-1}-\eqref{eq-0} for $t^{*k}, k \geq 1$. \\

Since $\{ \vz_{T-1} \}_{T=1}^{\infty}$ is a convergent sequence, so it is bounded, i.e. there exists a real number $q > 0$ such that $|| \vz_{T-1} || \leq q$ for all $T \in \mathbb{N}$. Furthermore, $\norm{\mD_{\Omega(T-1)}} \leq 1$ for all $T$. Therefore, by \eqref{eq-8} and  \eqref{n-1} (for $t^{*k}, k \geq 1$)
\begin{align}\nonumber
& 
\norm{\frac{\partial \vz_T}{\partial w_{mk}}}
 %
 \\[1ex]\nonumber
 & \,= \, 
 \Biggl|\!\Biggl| \mathbf{1}_{(m,k)} \mD_{\Omega(T-1)}\, \vz_{T-1} + \sum_{j=2}^{T-1} \, \bigg( \prod_{i=1}^{j-1} \mW_{\Omega(T-i)}\bigg) \, \mathbf{1}_{(m,k)} \, \mD_{\Omega(T-j)} \, \vz_{T-j}
 \\[1ex]
 &\hspace{1.2cm}+ \prod_{i=1}^{T-1} \mW_{\Omega(T-i)} \, \, \mD_{\Omega(1)}\, \vz_{1} \Biggr|\!\Biggr|
\\[1ex]\nonumber
& \, \leq \, 
 \norm{  \vz_{T-1}} +  
 \bigg[\sum_{j=2}^{T-1} \, \, \norm{\prod_{i=1}^{j-1} \mW_{\Omega(T-i)}}\, \, \norm{\vz_{T-j}} 
 \bigg]
 + \norm{\prod_{i=1}^{T-1} \mW_{\Omega(T-i)}}\norm{ \vz_{1}}
\\[1ex]\label{eq-15}
&  \, \leq \, 
q \bigg(1+ \sum_{j=2}^{T-1} \, \, \Big( \norm{\mW_{\Omega(t^{*k})}} + \bar{\epsilon} \Big)^{j-1}\, \, \bigg)+\Big( \norm{\mW_{\Omega(t^{*k})}}+ \bar{\epsilon} \Big)^{T-1}\norm{ \vz_{1}}.
%
\end{align}
%
%
Thus, by $\norm{\mW_{\Omega(t^{*k})}}+ \bar{\epsilon}<1$, we have 
\begin{align}\label{eq-re}
 \displaystyle{\lim_{T \to \infty} \norm{ \frac{\partial \vz_T}{\partial w_{mk}}}}  \leq \, 
q \bigg(1 +
\frac{\norm{\mW_{\Omega(t^{*k})}}+ \bar{\epsilon}}{1-\norm{\mW_{\Omega(t^{*k})}}- \bar{\epsilon}}\bigg) =\mathcal{M} < \infty,
\end{align}
i.e., by \eqref{eq-ten} and \eqref{eq-ger}, the $2$-norm of total gradient matrices and hence $\norm{\frac{\partial \vz_t}{\partial \mW}}_2$ will not diverge (explode) under the assumptions of Theorem \ref{thm-1}. 

Analogously, we can prove that $\norm{\frac{\partial \vz_T}{\partial \mA}}_2$ and  $\norm{\frac{\partial \vz_T}{\partial \vh}}_2$ will not diverge either. Since, similar as in the derivations above, it can be shown that relation \eqref{eq-re} is true for $\norm{\frac{\partial \vz_T}{\partial a_{mm}}}$ with $q= \bar{q}$, where $\bar{q}$ is the upper bound of $\norm{\vz_T}$, as $\{ \vz_{T} \}_{T=1}^{\infty}$ is convergent. Furthermore, relation \eqref{eq-re} also holds for $\norm{\frac{\partial \vz_T}{\partial h_m}}$ with $q=1$.

\begin{remark}
By \eqref{eq-co} the Jacobian parts $\norm{\frac{\partial \vz_T}{\partial \vz_t}}_2$ connecting any two states $\vz_T$ and $\vz_t$, $T > t$, will not diverge either.
\end{remark}

\begin{corollary}\label{}
The results of Theorem \ref{thm-1} are also true if $\mW_{\Omega(t^{*k})}$ is a normal matrix with no eigenvalue equal to one.
\end{corollary}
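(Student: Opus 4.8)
The plan is to reduce the statement to Theorem~\ref{thm-1} by verifying that, under the new hypotheses, the spectral-norm condition $\norm{\mW_{\Omega(t^{*k}-i)}}<1$ invoked in the proof of Theorem~\ref{thm-1} still holds. Inspecting that proof, this condition enters only through the chain \eqref{n-1}--\eqref{eq-0} (and then \eqref{eq-15}--\eqref{eq-re}), where it forces geometric decay of the Jacobian products $\prod_i \mW_{\Omega(T-i)}$ along the converging orbit. So it is enough to re-derive those inequalities from normality together with the no-unit-eigenvalue assumption, after which everything downstream is unchanged.

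First I would record the standard fact that a real normal matrix $\mN$ (i.e. $\mN\mN\tran=\mN\tran\mN$) is orthogonally similar to a block-diagonal matrix with $1\times 1$ and $2\times 2$ blocks carrying its (real or conjugate-pair) eigenvalues, so its singular values are exactly the moduli of its eigenvalues and hence $\norm{\mN}_2=\sigma_{max}(\mN)=\rho(\mN)$, the spectral radius. Thus, for a normal Jacobian, the spectral-norm hypothesis of Theorem~\ref{thm-1} is equivalent to the spectral-radius bound $\rho(\mW_{\Omega(t^{*k}-i)})<1$, and it remains to obtain this bound.

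Next I would argue $\rho(\mW_{\Omega(t^{*1})})<1$ from the two surviving hypotheses: that the orbit converges to the stable fixed point $\vz^{*1}$, and that $\mW_{\Omega(t^{*1})}$ has no eigenvalue equal to $1$. The latter makes $\mI-\mW_{\Omega(t^{*1})}$ invertible, so by \eqref{eq-2} the fixed point is isolated rather than part of a continuum (a manifold-attractor configuration). On the region $\Omega(t^{*1})$ the dynamics is the affine map $\vz\mapsto\mW_{\Omega(t^{*1})}\vz+\vh$; since by the argument around \eqref{lim-2} the orbit eventually enters and stays in that region, setting $\vu_T:=\vz_T-\vz^{*1}$ gives $\vu_{T}=\mW_{\Omega(t^{*1})}^{\,T-N}\vu_{N}$ for $T\ge N$, and $\vu_T\to 0$ for all such $\vu_N$ forces every eigenvalue of $\mW_{\Omega(t^{*1})}$ into the open unit disk: an eigenvalue of modulus one would be either $1$ (excluded) or $e^{i\theta}$ with $\theta\notin 2\pi\mathbb Z$, the latter producing an undamped oscillation incompatible with convergence to a point. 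Hence $\rho(\mW_{\Omega(t^{*1})})<1$, so by normality $\norm{\mW_{\Omega(t^{*1})}}_2<1$, and Theorem~\ref{thm-1} applies. For a $k$-cycle ($k>1$) the same reasoning is run at each periodic point $\vz_{t^{*k}-i}$ (using \eqref{eq-4} in place of \eqref{eq-2}), and one then takes $\max_{0\le i\le k-1}\norm{\mW_{\Omega(t^{*k}-i)}}_2<1$ exactly as in the original proof before feeding it into \eqref{n-1}--\eqref{eq-re}; the resulting upper bounds on $\norm{\partial\vz_T/\partial\mW}_2$, $\norm{\partial\vz_T/\partial\mA}_2$, $\norm{\partial\vz_T/\partial\vh}_2$ are identical.

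The one step I expect to require care is the passage ``stable fixed point with no eigenvalue $1$'' $\Rightarrow$ $\rho<1$ in the piecewise-linear setting, since the linear eigenvalue argument is only legitimate once the active region $\Omega$ along the orbit has stabilized; but this is exactly the fact already established and used in the proof of Theorem~\ref{thm-1} (see \eqref{lim}--\eqref{lim-2}), so no additional tools are needed. Every other step is a direct substitution into the existing argument, which is why the statement is recorded as a corollary rather than a separate theorem.
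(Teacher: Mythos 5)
Your proposal is correct and takes essentially the same route as the paper: the paper's own proof is the one-line observation that normality gives $\norm{\mW_{\Omega(t^{*k})}}_2 = \rho(\mW_{\Omega(t^{*k})}) < 1$, so the spectral-norm hypothesis of Theorem \ref{thm-1} is recovered and everything downstream applies verbatim. Your additional elaboration of why $\rho < 1$ follows from stability plus the exclusion of eigenvalue $1$ is more detail than the paper provides but is consistent with its argument.
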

\begin{proof}
If $\mW_{\Omega(t^{*k})}$ is normal, then $\norm{\mW_{\Omega(t^{*k})}} = \rho (\mW_{\Omega(t^{*k})})<1$ which satisfies the conditions of Theorem \ref{thm-1}.
\end{proof}

\subsubsection{\textit{\textbf{Proof of Theorem \ref{thm-2}}}}\label{p-thm2}
Let $\mA$, $\mW$ and $\mD_{\Omega(k)}$, $t<k \leq T$, be partitioned as follows
\begin{align}\label{AW_part}
\renewcommand\arraystretch{1.3}
& \mA= \begin{pmatrix}
\begin{array}{c|c}
  \mI_{reg} & \mO\tran\\[1ex]
  \hline
 \mO  & \mA_{nreg}
\end{array}
\end{pmatrix},
\hspace{.5cm}
& \mW= \begin{pmatrix}
\begin{array}{c|c}
  \mO_{reg} & \mO\tran \\[1ex]
  \hline
\mS  &  \mW_{nreg}
\end{array}
\end{pmatrix},
\hspace{.5cm}
& \mD_{\Omega(k)} = \begin{pmatrix}
\begin{array}{c|c}
   \mD^k_{reg} & \mO\tran \\[1ex]
  \hline
\mO  &  \mD^k_{nreg}
\end{array}
\end{pmatrix},
\end{align}
where $ \mI_{M_{reg} \times M_{reg}}:= \mI_{reg}\in \sR^{M_{reg}\times M_{reg}}, \mO_{M_{reg} \times M_{reg}}:= \mO_{reg} \in \sR^{M_{reg}\times M_{reg}}$, $ \mO, \mS \in \sR^{(M-M_{reg}) \times M_{reg}}$, $\mA_{ \{M_{reg}+1:M, M_{reg}+1:M \}}:= \mA_{nreg} \in \sR^{(M-M_{reg}) \times (M-M_{reg})}$ is a diagonal sub-matrix, $\mW_{\{M_{reg}+1:M, M_{reg}+1:M \}}:= \mW_{nreg} \in \sR^{(M-M_{reg}) \times (M-M_{reg})}$ is an off-diagonal sub-matrix (cf. Fig. \ref{fig:matrix_regularization}). 
Moreover, $  \mD^k_{M_{reg} \times M_{reg}}:= \mD^k_{reg} \in \sR^{M_{reg} \times M_{reg}}$ and $ \mD^k_{\{M_{reg}+1:M, M_{reg}+1:M \}}:= \mD^k_{nreg} \in \sR^{(M-M_{reg}) \times (M-M_{reg})}$ are diagonal sub-matrices. Then, we have
  
\begin{align}\nonumber
\prod_{t<k \leq T} &  W_{\Omega(k)} 
\\[1ex]\nonumber
& \, = \, \prod_{t<k \leq T}
\begin{pmatrix}
\begin{array}{c|c}
  \mI_{reg} & \mO\tran\\[1ex]
  \hline
 \mS \, \mD^k_{reg}  & \mA_{nreg}+ \mW_{nreg} \, \mD^k_{nreg}
\end{array}
\end{pmatrix}
\, := \, \prod_{t<k \leq T}
\begin{pmatrix}
\begin{array}{c|c}
  \mI_{reg} & \mO\tran\\[1ex]
  \hline
 \mS \, \mD^k_{reg}  & \mW^k_{nreg}
\end{array}
\end{pmatrix}
\\[1ex]\label{}
& \, = \, 
\begin{pmatrix}
\begin{array}{c|c}
  \mI_{reg} & \mO\tran\\[1ex]
  \hline
 \mS \, \mD^{t+1}_{reg} + \sum_{j=2}^{T} \, \big( \prod_{t<k \leq t+j-1} \mW^k_{nreg} \big) \, \mS \, \mD^{t+j}_{reg} & \prod_{t<k \leq T} \mW^k_{nreg}.
\end{array}
\end{pmatrix}
\end{align}
Therefore, considering the 2-norm, we obtain
\begin{align}\nonumber
\norm{\frac{\partial \vz_T}{\partial \vz_t}} & \, = \,  \norm{\prod_{t<k \leq T} \mW_{\Omega(k)}} 
\\[1ex]\label{eq-k1}
& \, = \, 
\norm{\begin{pmatrix}
\begin{array}{c|c}
  \mI_{reg} & \mO\tran\\[1ex]
  \hline
 \mS \, \mD^{t+1}_{reg} + \sum_{j=2}^{T} \, \big( \prod_{t<k \leq t+j-1} \mW^k_{nreg} \big) \, \mS \, \mD^{t+j}_{reg} & \prod_{t<k \leq T} \mW^k_{nreg}
\end{array}
\end{pmatrix}} < \infty.
\end{align}
Moreover 
\begin{align}\label{eq-k1-2}
1 \leq \max \{ 1,\rho (W_{T-t})\} = \rho\big(\prod_{t<k \leq T} W_{\Omega(k)} \big) \, \leq \,  \norm{\prod_{t<k \leq T} \mW_{\Omega(k)}} \, = \,
\norm{\frac{\partial \vz_T}{\partial \vz_t}} 
\end{align}
where $W_{T-t} := \prod_{t<k \leq T} \mW^k_{nreg}$. Therefore, \eqref{eq-k1} and \eqref{eq-k1-2} yield 
\begin{align}\nonumber
1 \leq \rho_{low}  \, \leq \, 
\norm{\frac{\partial \vz_T}{\partial \vz_t}}  \, \leq \, \rho_{up} < \infty.
\end{align}
Furthermore, we assumed that the non-regularized subsystem $(z_{M_{reg}+1} \ldots z_M)$, if considered in isolation, satisfies Theorem \ref{thm-1}. Hence, similar to the proof of Theorem \ref{thm-1}, it is concluded that
\begin{align}\label{eq-k4}
\displaystyle{\lim_{T \to \infty} \prod_{k=t}^{T} \mW^k_{nreg}} = \mO_{nreg}.
\end{align}
On the other hand, by definition of $\mD_{\Omega(k)}$, for every $t<k \leq T$, we have $\norm{\mD^{k}_{reg}} \leq 1$ and so
\begin{align}
\norm{\mS \, \mD^{k}_{reg}} \leq \norm{\mS} \, \norm{\mD^{k}_{reg}} \leq \norm{\mS},  
\end{align}
which, in accordance with the the assumptions of Theorem \ref{thm-1}, by convergence of $\sum_{j=2}^{T} \, \, \prod_{k=t+1}^{t+j-1}\norm{ \mW^k_{nreg} }$ implies 
\begin{align}\nonumber
& \displaystyle{\lim_{T \to \infty} \norm{\mS \, \mD^{t+1}_{reg} + \sum_{j=2}^{T} \, \big( \prod_{k=t+1}^{t+j-1}\mW^k_{nreg}\big) \, \mS \, \mD^{t+j}_{reg}}} \leq \norm{\mS} \bigg(1+ \displaystyle{\lim_{T \to \infty} \sum_{j=2}^{T} \, \, \prod_{k=t+1}^{t+j-1}\norm{ \mW^k_{nreg}}} \bigg) 
\\[1ex]\label{eq-k2}
& \, \leq \,  \norm{\mS} \mathcal{M}_{nreg}.
\end{align}
Thus, denoting $\mQ :=  \mS \, \mD^{t+1}_{reg} + \sum_{j=2}^{T} \, \big( \prod_{t<k \leq t+j-1} \mW^k_{nreg} \, \mS \, \mD^{t+j}_{reg} \bigg)$, from \eqref{eq-k2} we deduce that
 \begin{align}\label{eq-k5}
 \lambda_{max} \big(\displaystyle{\lim_{T \to \infty} (\mQ\tran \, \mQ)} \big) \, = \, \displaystyle{\lim_{T \to \infty} \rho(\mQ \tran \mQ)} \, \leq \,  \displaystyle{\lim_{T \to \infty}  \norm{\mQ \tran \mQ}}  \,= \,  \displaystyle{\lim_{T \to \infty} \norm{\mQ}^2} \, \leq \, \big(\norm{\mS} \mathcal{M}_{nreg} \big)^2.  
 \end{align}
Now, if $T-t$ tends to $\infty$, then \eqref{eq-k1}, \eqref{eq-k4} and \eqref{eq-k5} result in 
\begin{align}\nonumber
& 1  \, = \, \rho_{low}  \, \leq \, \norm{\frac{\partial \vz_T}{\partial \vz_t}} \, = \,
\sigma_{max} \bigg(\begin{pmatrix}
\begin{array}{c|c}
  \mI_{reg} & \mO\tran\\[1ex]
  \hline
\mQ & \mO_{nreg}
\end{array}
\end{pmatrix} \bigg) 
\, = \,  \sqrt{\lambda_{max}(\mI_{reg}+ \displaystyle{\lim_{T \to \infty} (\mQ\tran \, \mQ)})}
\\[1ex]\label{}
&  \, = \, \rho_{up} \, < \, \infty. 
\end{align}
\begin{remark}
If $\norm{\mS} = 0 $, then $\norm{\frac{\partial \vz_T}{\partial \vz_t}} \to 1$ as $T-t \to \infty$.
\end{remark}
\subsubsection{\textbf{\textit{Details on EM algorithm and DS reconstruction}}}\label{section:EM_DS}
For DS reconstruction we request that the latent RNN approximates the true generating system of equations, which is a taller order than learning the mapping $\mS\rightarrow\mX$ or predicting future values in a time series (cf. sect. \ref{section:performance_measures}).\footnote{By reconstructing the governing equations we mean their approximation in the sense of the universal approximation theorems for DS \citep{Funahashi1993, Kimura1998}, i.e. such that the behavior of the reconstructed system becomes \textit{dynamically} equivalent to that of the true underlying system.}
This point has important implications for the design of models, inference algorithms and performance metrics if the primary goal is DS reconstruction rather than `mere' time series forecasting.\footnote{In this context we also remark that models which include longer histories of hidden activations \citep{Yu2019b}, as in many statistical time series models \citep{Fan2003}, are not formally valid DS models anymore since they violate the uniqueness of flow in state space \citep{Strogatz}.} 
In this context we consider the fully probabilistic, generative RNN \eqref{eq:PLRNN}.
Together with \eqref{eq:PLRNN_obs} (where we take $g(\vz_t)=\phi(\vz_t$)) this gives the typical form of a nonlinear state space model \citep{Durbin2012} with observation and process noise.
We solve for the parameters $\bm\theta=\{\mA,\mW,\mC,\vh,\bm\mu_0, \bm\Sigma,\mB,\bm\Gamma\}$ by maximum likelihood, for which an efficient Expectation-Maximization (EM) algorithm has recently been suggested \citep{Durstewitz2017, Koppe2019}, which we will summarize here.
Since the involved integrals are not tractable, we start off from the evidence-lower bound (ELBO) to the log-likelihood which can be rewritten in various useful ways:
\begin{align}\label{eq:elbo}\nonumber
    \log p(\mX\vert\bm{\theta}) &\geq\nonumber
    \E_{\mZ\sim q}[\log p_{\bm\theta}(\mX,\mZ)] + H\left(q(\mZ\vert\mX)\right) \\ \nonumber
    &= \log p(\mX\vert\bm\theta) - \KL\left(q(\mZ\vert\mX)\Vert p_{\bm\theta}(\mZ\vert\mX)\right) \\
    &\eqqcolon \mathcal{L}\left(\bm\theta, q\right)
\end{align}
In the E-step, given a current estimate $\bm\theta^*$ for the parameters, we seek to determine the posterior $p_{\bm\theta}\left(\mZ\vert\mX\right)$ which we approximate by a global Gaussian $q(\mZ\vert\mX)$ instantiated by the maximizer (mode) $\mZ^*$ of $p_{\bm\theta}(\mZ\vert\mX)$ as an estimator of the mean, and the negative inverse Hessian around this maximizer as an estimator of the state covariance, i.e. 
\begin{align}\nonumber
    \E [\mZ\vert\mX]&\approx \mZ^* = \underset{\mZ}{\argmax}\log p_{\bm\theta}(\mZ\vert\mX)\\ \nonumber
    &=\underset{\mZ}{\argmax}[\log p_{\bm\theta}(\mX\vert\mZ) + \log p_{\bm\theta}(\mZ) 
    - \log p_{\bm\theta}(\mX)]\\
    &= \underset{\mZ}{\argmax}\left[\log p_{\bm\theta}(\mX\vert\mZ) + \log p_{\bm\theta}(\mZ)\right],
    \label{eq:E_step}
\end{align}
since $\mZ$ integrates out in $p_{\bm\theta}(\mX)$ (equivalently, this result can be derived from a Laplace approximation to the log-likelihood, $\log p(\mX\vert\bm\theta)\approx \log p_{\bm\theta} (\mX\vert\mZ^*)+\log p_{\bm\theta} (\mZ^*) - \frac{1}{2} \log\vert-\bm L^*\vert + \mathrm{const}$, where $\bm L^*$ is the Hessian evaluated at the maximizer).
We solve this optimization problem by a fixed-point iteration scheme that efficiently exploits the model's piecewise linear structure, as detailed below.

Using this approximate posterior for $p_{\bm\theta}(\mZ\vert\mX)$, based on the model's piecewise-linear structure most of 
the expectation values $\E_{\vz\sim q}\left[\phi(\vz)\right]$, $\E_{\vz\sim q}\left[\phi(\vz)\vz\tran\right]$, and $\E_{\vz\sim q}\left[\phi(\vz)\phi(\vz)\tran\right]$, could be solved for (semi-)analytically (where $\vz$ is the concatenated vector form of $\mZ$, see below). 
In the M-step, we seek $\bm\theta^* \coloneqq \argmax_{\bm\theta}~\mathcal{L}(\bm\theta, q^*)$, assuming proposal density $q^*$ to be given from the E-step, which for a Gaussian observation model amounts to a simple linear regression problem (see Suppl. \eqref{eq:theta_solution}).
To force the PLRNN to really capture the underlying DS in its governing equations, we use a previously suggested \citep{Koppe2019} stepwise annealing protocol that gradually shifts the burden of fitting the observations $\mX$ from the observation model \eqref{eq:PLRNN_obs} to the latent RNN model \eqref{eq:PLRNN} during training, the idea of which is to establish a mapping from latent states $\mZ$ to observations $\mX$ first, fixing this, and then enforcing the temporal consistency constraints implied by \eqref{eq:PLRNN} while accounting for the actual observations.

Now we briefly outline the fixed-point-iteration algorithm for solving the maximization problem in \eqref{eq:E_step} (for more details see \cite{Durstewitz2017, Koppe2019}).
Given a Gaussian latent PLRNN and a Gaussian observation model, the joint density $p(\mX,\mZ)$ will be piecewise Gaussian, hence \eqref{eq:E_step} piecewise quadratic in $\mZ$.
Let us concatenate all state variables across $m$ and $t$ into one long column vector $\vz = \left(z_{1,1}, \ldots, z_{M,1}, \ldots, z_{1,T}, \ldots, z_{M,T}\right)\tran$, arrange matrices $\mA$, $\mW$ into large $MT\times MT$ block tri-diagonal matrices, define $\vd_\Omega \coloneqq \left(\1_{z_{1,1}>0}, \1_{z_{2,1}>0},\ldots, \1_{z_{M,T}>0}\right)\tran$ as an indicator vector with a 1 for all states $z_{m,t}>0$ and zeros otherwise, and $\mD_\Omega \coloneqq \mathrm{diag}(\vd_\Omega)$ as the diagonal matrix formed from this vector.
Collecting all terms quadratic, linear, or constant in $\vz$, we can then write down the optimization criterion in the form
\begin{align}\nonumber
    Q_\Omega^*(\vz) &= -\frac{1}{2}[\vz\tran\left(\mU_0 + \mD_\Omega \mU_1 
    + \mU_1\tran \mD_\Omega + \mD_\Omega \mU_2 \mD_\Omega\right) \vz \\ 
    &- \vz\tran\left(\vv_0 + \mD_\Omega \vv_1\right)
    - \left(\vv_0 + \mD_\Omega \vv_1\right)\tran \vz ] + \mathrm{const.}
\end{align}
In essence, the algorithm now iterates between the two steps:
\begin{enumerate}
    \item Given fixed $\mD_\Omega$, solve
    \begin{align} 
        \vz^*=&\left(\mU_0 + \mD_\Omega \mU_1 
        + \mU_1\tran \mD_\Omega + \mD_\Omega \mU_2 \mD_\Omega\right)^{-1}
        \cdot\left(\vv_0 + \mD_\Omega \vv_1\right)
    \end{align}
    \item Given fixed $\vz^*$, recompute $\mD_\Omega$
\end{enumerate}
until either convergence or one of several stopping criteria (partly likelihood-based, partly to avoid loops) is reached.
The solution may afterwards be refined by one quadratic programming step.
Numerical experiments showed this algorithm to be very fast and efficient \citep{Durstewitz2017, Koppe2019}.
At $\vz^*$, an estimate of the state covariance is then obtained as the inverse negative Hessian, 
\begin{equation}
    \mV = \left(\mU_0 + \mD_\Omega \mU_1 
    + \mU_1\tran \mD_\Omega + \mD_\Omega \mU_2 \mD_\Omega\right)^{-1}.
\end{equation}
In the M-step, using the proposal density $q^*$ from the E-step, the solution to the maximization problem $\bm\theta^* \coloneqq \underset{\bm\theta}{\argmax}~\mathcal{L}(\bm\theta, q^*)$, can generally be expressed in the form
\begin{equation}\label{eq:theta_solution}
    \bm\theta^* = \left(\sum_t\E\left[\bm\alpha_t \bm\beta_t\tran\right]\right)
    \left(\sum_t\E\left[\bm\beta_t\bm\beta_t\tran\right]\right)^{-1},
\end{equation}
where, for the latent model, \eqref{eq:PLRNN}, $\bm\alpha_t = \vz_t$ and $\bm\beta_t \coloneqq \left[\vz_{t-1}\tran, \phi(\vz_{t-1})\tran, \vs_t\tran, 1\right]\tran\in \sR^{2M+K+1}$, and for the observation model, \eqref{eq:PLRNN_obs}, $\bm\alpha_t = \vx_t$ and $\bm\beta_t = g\left(\vz_t\right)$.

\label{section:supplement_details_em_algorithm}

\subsubsection{\textit{\textbf{More details on DS performance measure}}}
\label{section:supplement_details_ds_performance_measure}
As argued before \citep{Koppe2019, Wood2010}, in DS reconstruction we require that the RNN captures the underlying \textit{attractor geometries} and state space properties. This does not necessarily entail that the reconstructed system could predict future time series observations more than a few time steps ahead, and vice versa. For instance, if the underlying attractor is chaotic, even if we had the \textit{exact true system} available, with a tiny bit of noise trajectories starting from the same initial condition will quickly diverge and ahead-prediction errors become essentially meaningless as a DS performance metric (Fig. \ref{fig:S1}\textbf B).

To quantify how well an inferred PLRNN captured the underlying dynamics we therefore followed \cite{Koppe2019} and used the Kullback-Leibler divergence between the true and reproduced probability distributions across states in state space, thus assessing the agreement in attractor geometries (cf. \cite{Takens1981, Sauer1991}) rather than in precise matching of time series, \begin{align}\label{eq:kl_divergence} 
    \displaystyle \KL \left(  p_{\mathrm{true}}(\vx) \Vert  p_{\mathrm{gen}}(\vx\vert \vz)\right) 
    \approx \sum_{k=1}^K \hat p_{\mathrm{true}}^{(k)}(\vx)
    \log\left(\frac{\hat p^{(k)}_{\mathrm{true}}(\vx)}{\hat p^{(k)}_{\mathrm{gen}}(\vx\vert \vz)}\right),
\end{align}
where $ p_{\mathrm{true}}(\vx)$ is the true distribution of observations across state space (not time!), $ p_{\mathrm{gen}}(\vx\vert \vz)$ is the distribution of observations generated by running the inferred PLRNN, and the sum indicates a spatial discretization (binning) of the observed state space. We emphasize that $\hat p^{(k)}_{\mathrm{gen}}(\vx\vert \vz)$ is obtained from freely \textit{simulated} trajectories, i.e. drawn from the prior $\hat p(\vz)$ specified by \eqref{eq:PLRNN}, not from the inferred posteriors $\hat p(\vz|\vx_{\mathrm{train}})$.
In addition, to assess reproduction of time scales by the inferred PLRNN, the average MSE between the power spectra of the true and generated time series was computed, as displayed in Fig. \ref{fig:ds_reconstruction}\textbf B--\textbf C.

The measure $\KL$ introduced above only works for situations where the ground truth $p_{\mathrm{true}}(\mX)$ is known.
Following \cite{Koppe2019}, we next briefly indicate how a proxy for $\KL$ may be obtained in empirical situations where no ground truth is available.
Reasoning that for a well reconstructed DS the inferred posterior $p_{\mathrm{inf}}(\vz\vert\vx)$ given the observations should be a good representative of the prior generative dynamics $p_{\mathrm{gen}}(\vz)$, one may use the Kullback-Leibler divergence between the distribution over latent states, obtained by sampling from the prior density $p_{\mathrm{gen}} (\vz)$, and the (data-constrained) posterior distribution  $p_{\mathrm{inf}}(\vz\vert\vx)$ (where $\vz\in\sR^{M\times 1}$ and $\vx\in\sR^{N\times 1}$), taken across the system's state space: 
\begin{align}\label{eq:suppl_kl_divergence_proxy} 
    \displaystyle\KL\left(p_{\mathrm{inf}}(\vz\vert\vx)\Vert p_{\mathrm{gen}}(\vz)\right) = 
    \int_{\vz\in\sR^{M\times 1}} p_{\mathrm{inf}}(\vz\vert\vx)\log\frac{ p_{\mathrm{inf}}(\vz\vert\vx)}{p_{\mathrm{gen}}(\vz)}d\vz
\end{align}
As evaluating this integral is difficult, one could further approximate  $p_{\mathrm{inf}}(\vz\vert\vx)$ and $p_{\mathrm{gen}}(\vz)$ by Gaussian mixtures across trajectories, i.e. $p_{\mathrm{inf}}(\vz\vert\vx)\approx\frac{1}{T}\sum_{t=1}^T p(\vz_t\vert\vx_{1:T})$ and $p_{\mathrm{gen}}(\vz)\approx\frac{1}{L}\sum_{l=1}^L p(\vz_l\vert\vz_{l-1})$, where the mean and covariance of $p(\vz_t\vert\vx_{1:T})$ and $p(\vz_l \vert\vz_{l-1})$ are obtained by marginalizing over the multivariate distributions $p(\mZ\vert\mX)$ and $p_{\mathrm{gen}}(\mZ)$, respectively, yielding $\E[\vz_t\vert\vx_{1:T} ]$, $\E[\vz_l\vert\vz_{l-1}]$, and covariance matrices $\Var(\vz_t \vert\vx_{1:T})$ and $\Var(\vz_l\vert\vz_{l-1})$.
Supplementary \eqref{eq:suppl_kl_divergence_proxy} may then be numerically approximated through Monte Carlo sampling \citep{Hershey2007} by
\begin{align}\nonumber
    \KL\left(p_{\mathrm{inf}}(\vz\vert\vx)\Vert p_{\mathrm{gen}}(\vz)\right)
    &\approx \frac{1}{n}\sum_{i=1}^{n}\log
    \frac{ p_{\mathrm{inf}}(\vz^{(i)}\vert\vx)}
    {p_{\mathrm{gen}}(\vz^{(i)})},\\ 
    \vz^{(i)} &\sim p_{\mathrm{inf}}(\vz\vert\vx)
\end{align}
Alternatively, there is also a 
variational approximation of \eqref{eq:suppl_kl_divergence_proxy} available \citep{Hershey2007}:
\begin{align}
    \KL^{\mathrm{variational}}\left(p_{\mathrm{inf}}(\vz\vert\vx)\Vert p_{\mathrm{gen}}(\vz)\right)
    \approx 
    \frac{1}{T} \sum_{t=1}^T\log
    \frac{\sum_{j=1}^T e^{-\KL\left(p(\vz_t\vert\vx_{1:T})\Vert p(\vz_j\vert\vx_{1:T})\right)}}
    {\sum_{k=1}^T e^{-\KL\left(p(\vz_t\vert\vx_{1:T})\Vert p(\vz_k\vert\vz_{k-1})\right)}},
\end{align}
where the KL divergences in the exponentials are among Gaussians for which we have an analytical expression.

\subsubsection{\textit{\textbf{More details on benchmark tasks and model comparisons}}}
\label{section:suppl_details_benchmark_model}
We compared the performance of our rPLRNN to the other models summarized in Suppl. Table \ref{tab:models} on the following three benchmarks requiring long short-term maintenance of information (\cite{Talathi2015,Hochreiter1997}):
\textbf{1)} The \textit{addition problem} of time length $T$ consists of $100\,000$ training and $10\,000$ test samples of $2 \times T$ input series $\mS=\{\vs_1,\ldots,\vs_T\}$, where entries $\vs_{1,:} \in [0, 1]$ are drawn from a uniform random distribution and $\vs_{2,:} \in \{0,1\}$ contains zeros except for two indicator bits placed randomly at times $t_1<10$ and $t_2<T/2$.
Constraints on $t_1$ and $t_2$ are chosen such that every trial requires a long memory of at least $T/2$ time steps.
At the last time step $T$, the target output of the network is the sum of the two inputs in $\vs_{1,:}$ indicated by the 1-entries in $\vs_{2,:}$, $x^{\mathrm{target}}_T = s_{1,t_1} + s_{1, t_2}$.
\textbf{2)} The \textit{multiplication problem} is the same as the addition problem, only that the product instead of the sum has to be produced by the RNN as an output at time $T$, $x^{\mathrm{target}}_T = s_{1,t_1} \cdot s_{1, t_2}$.
\textbf{3)} The MNIST dataset \citep{MNISTa} consists of $60\,000$ training and $10\,000$  $28 \times 28$ test images of hand written digits. To make this a time series problem, in \textit{sequential MNIST} the images are presented sequentially, pixel-by-pixel, scanning lines from upper left to bottom-right, resulting in time series of fixed length $T=784$. 

For training on the addition and multiplication problems, the mean squared-error loss across $R$ samples, $\mathcal{L}=\frac{1}{R}\sum_{n=1}^R\left(\hat x^{(n)}_T - x^{(n)}_T\right)^2$, between estimated and actual outputs was used, while the cross-entropy loss $\mathcal{L} = \sum_{n=1}^R \left(-\sum_{i=1}^{10}x^{(n)}_{i,T} \log(\hat p^{(n)}_{i,T})\right)$ was employed for sequential MNIST, where
\begin{equation}
    \hat p_{i,t} \coloneqq \hat p_t\left(x_{i,t}=1|\vz_t\right)=
    \left(e^{\mB_{i,:}\vz_t}\right)\left(\sum_{j=1}^N e^{\mB_{j,:}\vz_t}\right)^{-1},
    \label{eq:PLRNN_obs_multicategorical}
\end{equation}
with $x_{i,t}\in\{0,1\},~\sum_i x_{i,t}=1$.
We remark that as long as the observation model takes the form of a \textit{generalized linear model} \citep{Fahrmeir2001}, as assumed here, meaning may be assigned to the latent states $z_m$ by virtue of their association with specific sets of observations ${x_n}$ through the factor loading matrix $\mB$. This adds another layer of model interpretability (besides its accessibility in DS terms).

The large error bars in Fig. \ref{fig:comparison_all} at the transition from good to bad performance result from the fact that the networks mostly learn these tasks in an all-or-none fashion.
While the rPLRNN in general outperformed the pure initialization-based models (iRNN, npRNN, iPLRNN), confirming that a manifold attractor subspace present at initialization may be lost throughout training, we conjecture that this difference in performance will become even more pronounced as noise levels or task complexity increase.

\begin{table*}
    \caption{Overview over the different models used for comparison}
    \centering
    \begin{tabular}{p{0.14\textwidth}p{0.7\textwidth}} \toprule
    \multicolumn{1}{c}{\bf NAME}  & \multicolumn{1}{c}{\bf DESCRIPTION}\\ \midrule
    RNN & Vanilla ReLU based RNN \\ \midrule
    iRNN  & RNN with initialization $\mW_0 = \mI$ and $\vh_0 = \mathbf{0}$ \citep{Le2015}\\ \midrule
    npRNN & RNN with weights initialized to a normalized positive definite matrix with largest eigenvalue of 1 and biases initialized to zero \citep{Talathi2015}\\ \midrule
    PLRNN & PLRNN as given in \eqref{eq:PLRNN} \citep{Koppe2019} \\ \midrule
    iPLRNN & PLRNN with initialization $\mA_0 = \mI$, $\mW_0 = \mathbf{0}$ and $\vh_0 = \mathbf{0}$\\ \midrule
    rPLRNN & PLRNN initialized as illustrated in Fig. \ref{fig:matrix_regularization}, with additional regularization term (\eqref{eq:regularization_penalty}) \\ \midrule
    LSTM & Long Short-Term Memory \citep{Hochreiter1997}
    \\ \midrule
    oRNN & ReLU RNN with $\mW$ regularized toward orthogonality ($\mW\mW^T \rightarrow \mathbf \mI$) \citep{Vorontsov2017}
    \\ \midrule
    L2RNN & Vanilla RNN with standard L2 regularization on all weights\\
    \\ \midrule
    L2pPLRNN & PLRNN with (partial) standard L2 regularization for proportion $M_{\textrm{reg}}/M=0.5$ of units (i.e., pushing all terms in $\mA$ and $\mW$ for these units to $0$)\\
    \\ \midrule
    L2fPLRNN & PLRNN with (full) standard L2 regularization on all weights\\
    \bottomrule
    \end{tabular}
    \label{tab:models}
\end{table*}

\newpage
\subsubsection{\textit{\textbf{More details on single neuron model}}}
\label{section:supplement_neuron_model}
The neuron model used in section \ref{section:numerical_experiments_ds} is described by
\begin{align}\label{eq:neuron_model}\nonumber
    -C_{m} \dot V &= g_{L}(V-E_L) + g_{Na}m_\infty(V)(V-E_{Na}) \\ \nonumber
    & + g_K n(V-E_K) + g_Mh(V-E_K) \\
    &+ g_{NMDA}\sigma(V)(V-E_{NMDA})
\end{align}
\begin{align}
    \dot h &= \frac{h_\infty(V) - h}{\tau_h} \\
    \dot n &= \frac{n_\infty(V) - n}{\tau_n} \\
    \sigma(V) &= \left[1+ .33 e^{-.0625V} \right]^{-1}
\end{align}
where $C_m$ refers to the neuron's membrane capacitance, the $g_{\bullet}$ to different membrane conductances, $E_{\bullet}$  to the respective reversal potentials, and $m$, $h$, and $n$ are gating variables with limiting values given by 
\begin{align}
    \{m_{\infty}, n_{\infty}, h_{\infty}\} = 
    \left[1+e^{(\{V_{hNa},V_{hK},V_{hM}\}-V)/\{k_{Na},k_{K},k_{M}\}} \right]^{-1}
\end{align}
Different parameter settings in this model lead to different dynamical phenomena, including regular spiking, slow bursting or chaos (see \cite{Durstewitz2009} for details).
Parameter settings used here were:
$C_m=\SI{6}{\micro\farad}$, $g_L=\SI{8}{\milli\siemens}$, $E_L=\SI{-80}{\milli\volt}$, $g_{Na}=\SI{20}{\milli\siemens}$, $E_{Na}=\SI{60}{\milli\volt}$, 
$V_{hNa}=\SI{-20}{\milli\volt}$, $k_{Na}=15$, $g_K=\SI{10}{\milli\siemens}$, $E_K=\SI{-90}{\milli\volt}$, $V_{hK}=\SI{-25}{\milli\volt}$, $k_K=5$, $\tau_n=\SI{1}{\milli\second}$,
$g_M=\SI{25}{\milli\siemens}$, $V_{hM}=\SI{-15}{\milli\volt}$, $k_M=5$, $\tau_h=\SI{200}{\milli\second}$, $g_{NMDA}=\SI{10.2}{\milli\siemens}$.


\newpage
\onecolumn
\subsection{Supplementary Figures}
\vspace{2cm}
\begin{figure}[ht] 
    \center{\includegraphics[trim=0 0 0 0,clip,width=0.7\textwidth]
    {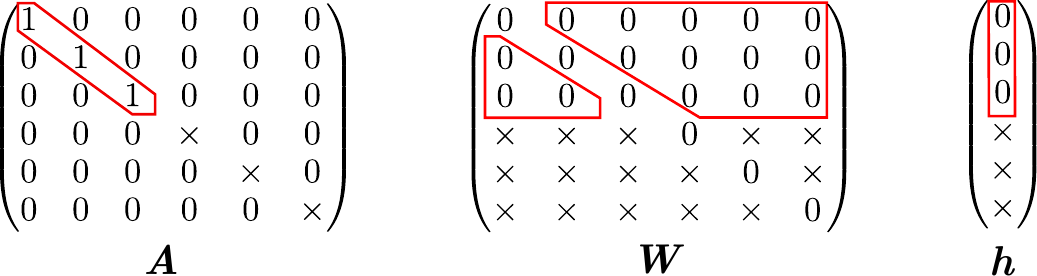}}
    \caption{\label{fig:matrix_regularization} Illustration of the `manifold-attractor-regularization' for the PLRNN's auto-regression matrix $\mA$, coupling matrix $\mW$, and bias terms $\vh$. Regularized values are indicated in red, crosses mark arbitrary values (all other values set to $0$ as indicated).}
\end{figure}
\vspace{1cm}

\begin{figure}[ht] 
    \center{\includegraphics[trim=0 0 0 0,clip,width=\textwidth]
    {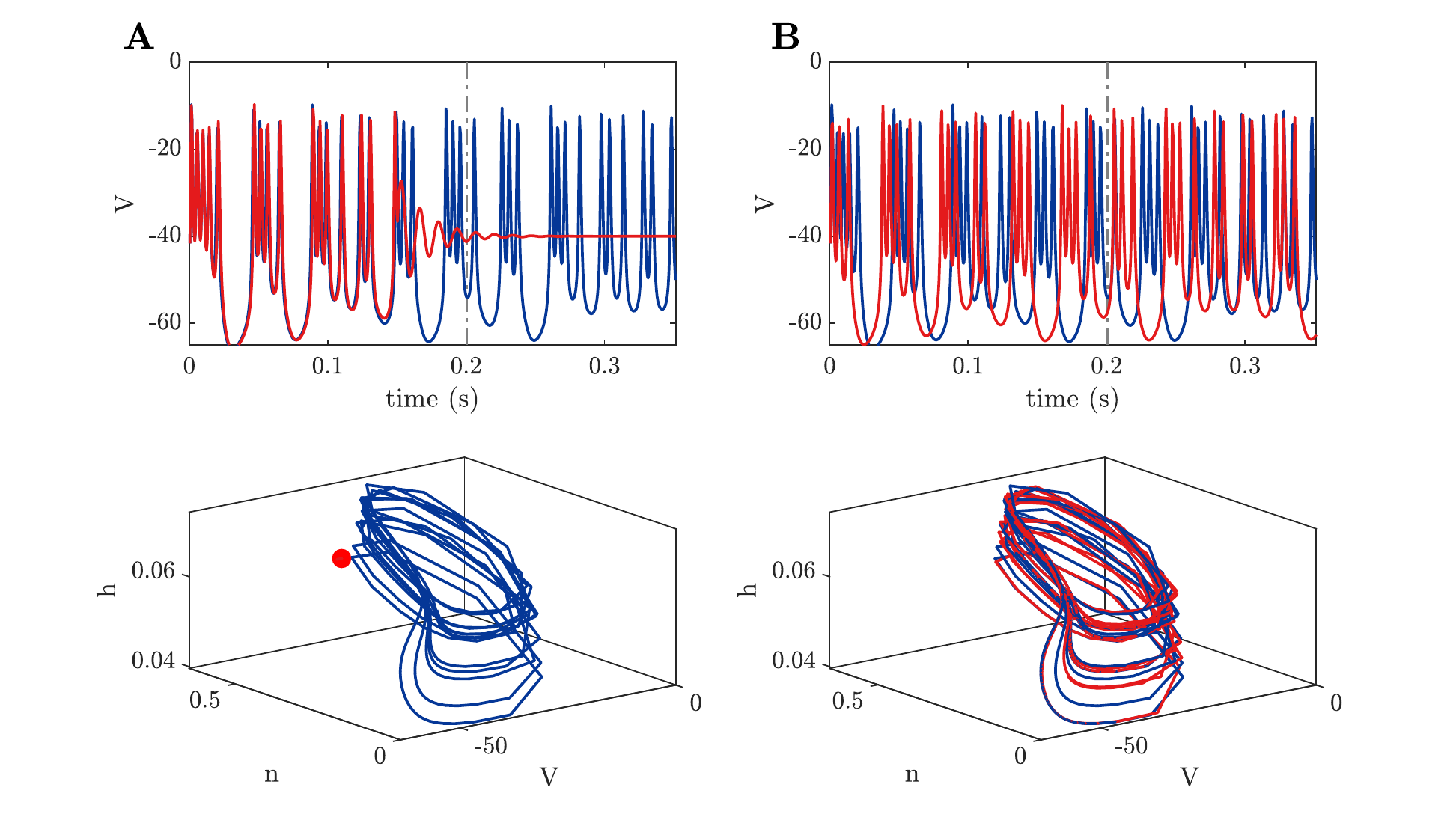}}
    \caption{MSE evaluated between time series is not a good measure for DS reconstruction. \textbf A) Time graph (top) and state space (bottom) for the single neuron model (see section \ref{section:numerical_experiments_ds} and Suppl. \ref{section:supplement_neuron_model}) with parameters in the chaotic regime (blue curves) and with simple fixed point dynamics in the limit (red line). Although the system has vastly different limiting behaviors (attractor geometries) in these two cases, as visualized in the state space, the agreement in time series initially seems to indicate a perfect fit. \textbf B) Same as in \textbf A) for two trajectories drawn from exactly the same DS (i.e., same parameters) with slightly different initial conditions. Despite identical dynamics, the trajectories immediately diverge, resulting in a high MSE. Dash-dotted grey lines in top graphs indicate the point from which onward the state space trajectories were depicted.}
    \label{fig:S1}
\end{figure}

\begin{figure}[ht] 
    \centering
    \includegraphics[trim=0 0 0 0,clip,width=\textwidth]{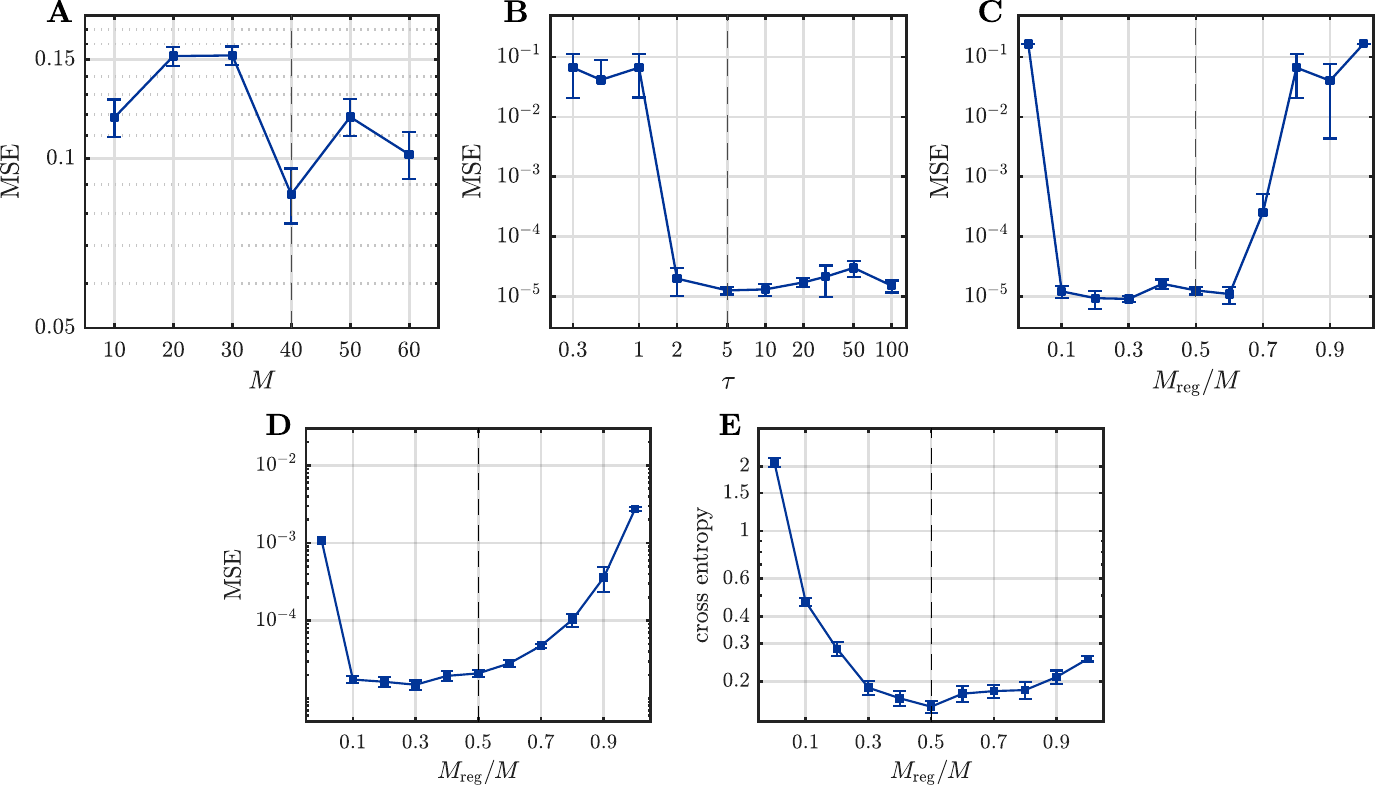}
    \caption{Performance of the rPLRNN for different \textbf{A}) numbers of latent states $M$, \textbf{B}) values of $\tau$, and \textbf{C}--\textbf{E}) proportions $M_{\mathrm{reg}}/M$ of regularized states. \textbf{A}--\textbf{C} are for the addition problem, \textbf{D} for the multiplication problem, and \textbf{E} for sequential MNIST. Dashed lines denote the values used for the results reported in section \ref{section:numerical_experiments_ml}.}
    \label{fig:suppl_find_best_params}
\end{figure}

\begin{figure*}[ht] 
    \center{\includegraphics[trim=0 11cm 0 11.5cm,clip,width=\textwidth]{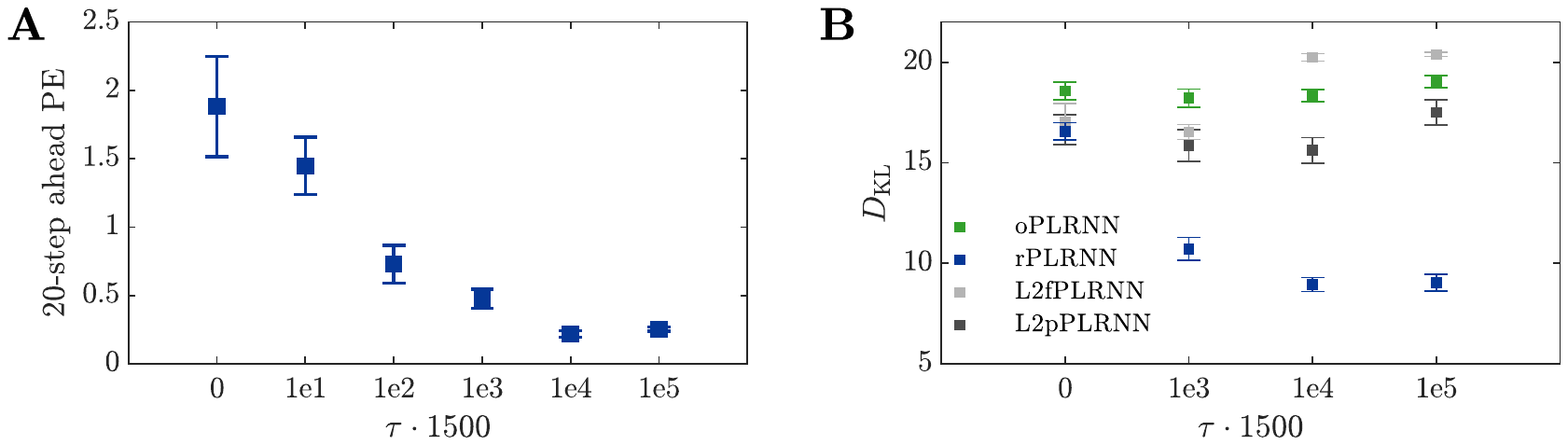}}
    \caption{\textbf A) 20-step-ahead prediction error between true and generated observations for rPLRNN as a function of regularization $\tau$. \textbf B) KL divergence ($D_{\mathrm{KL}}$) between true and generated state space distributions for orthogonal PLRNN (oPLRNN; i.e., the PLRNN with the `manifold attractor regularization' replaced by an orthogonality regularization, $(\mA+\mW)(\mA+\mW)^T \rightarrow \mathbf \mI$), as well as for the partially (L2p) and fully (L2f) standard L2-regularized PLRNNs (i.e., with all weight parameters for all (L2f) or only a fraction $M_{\mathrm{reg}}/M$ of states (L2p) driven to $0$). Note that the quality of the DS reconstruction does not significantly depend on the strength of regularization $\tau$, or becomes even slightly worse, for the oPLRNN, L2pPLRNN and L2fPLRNN. Globally diverging estimates were removed.}
    \label{fig:S4}
\end{figure*}

\begin{figure*}[ht] 
    \center{\includegraphics[trim=0 6cm 0 6cm,clip,width=\textwidth]{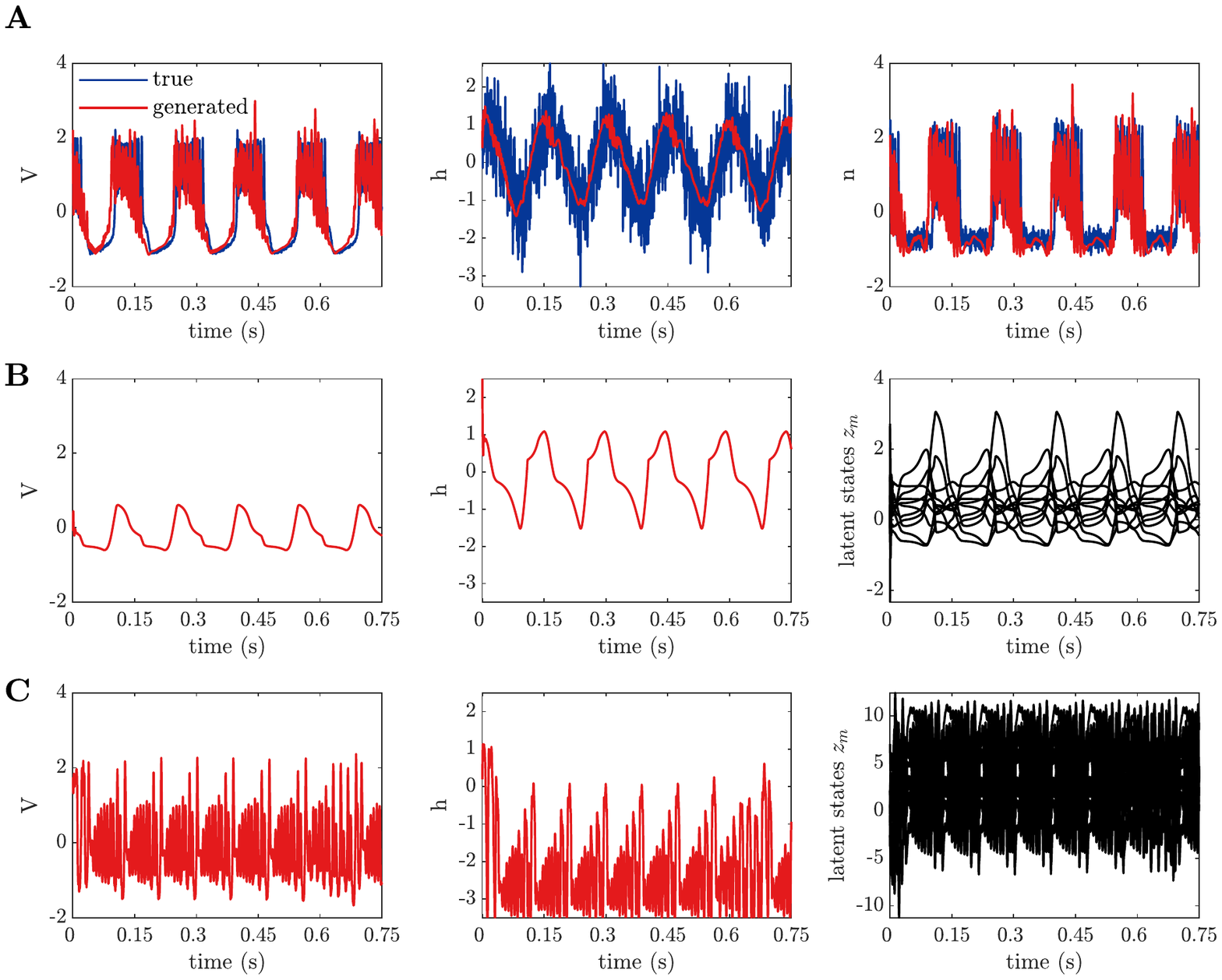}}
    \caption{\textbf A) Reconstruction of fast gating variable $n$ (rightmost) not shown in Fig. \ref{fig:ds_reconstruction}\textbf{D}. For completeness and comparison, other variables have been re-plotted from Fig. \ref{fig:ds_reconstruction}\textbf{D} as well. \textbf B) Example of reconstruction of voltage ($V$, left) and slow gating ($h$, center) observations, and underlying latent state dynamics (right) for oPLRNN (with orthogonality regularization on $\mA+\mW$, see Fig. \ref{fig:S4} legend). \textbf C) Example of $V$ (left) and $h$ (center) observations for standard PLRNN, and underlying latent state dynamics (right). In general, both the standard and the oPLRNN tended to produce many fixed point solutions. In those cases where this was not the case, the standard PLRNN tended to reproduce only the fast components of the dynamics as in the example in \textbf C (in agreement with the results in Figs. \ref{fig:ds_reconstruction}\textbf{C} \& \ref{fig:ds_reconstruction}\textbf{E}), while the oPLRNN tended to capture only the slow components as in the example in \textbf B (as expected from the fact that the orthogonality constraint tends to produce solutions similar to those obtained for the regularized states only, cf. Fig. \ref{fig:ds_reconstruction}\textbf{E}).}
    \label{fig:FigS5_revision}
\end{figure*}


\begin{figure*}[t] 
    \center{\includegraphics[trim=30 0 61 5,clip,width=\textwidth]
    {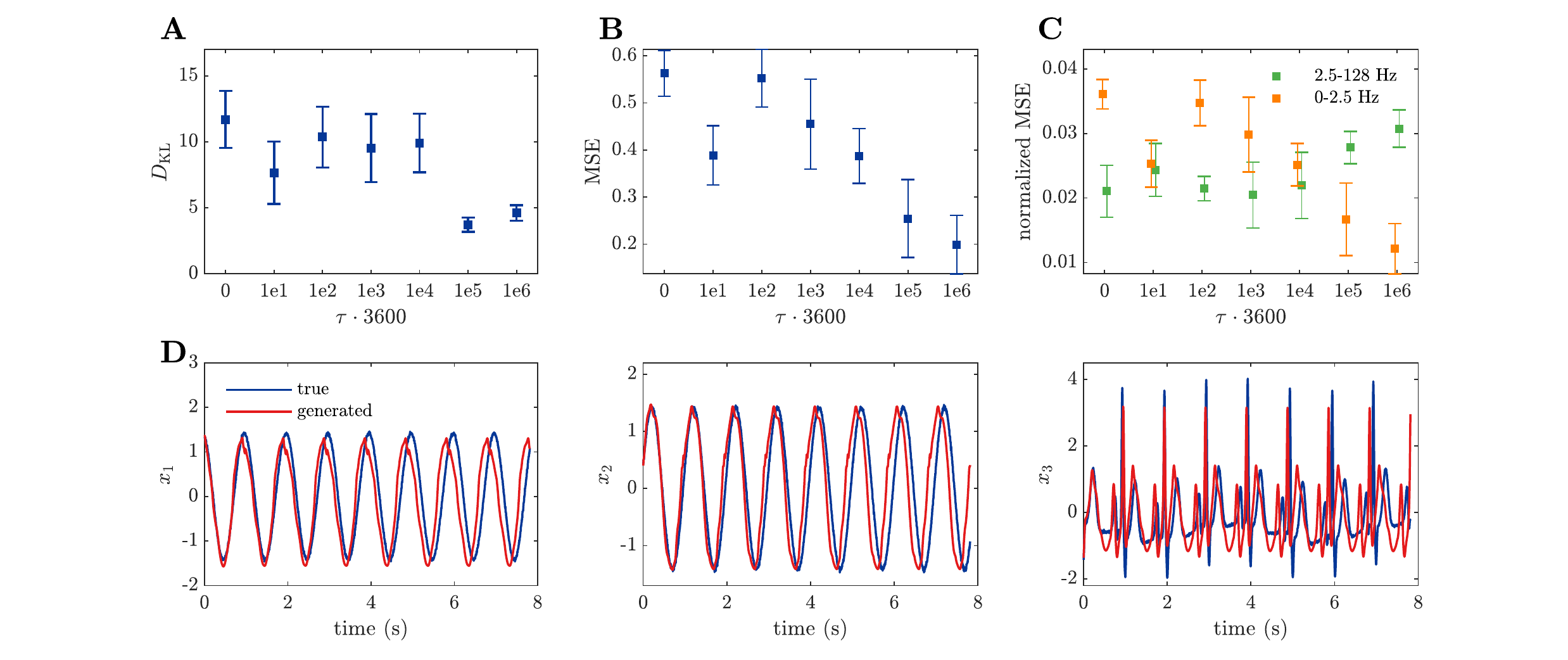}} 
    \caption{ Reconstruction of a DS with multiple time scales like fast spikes and slow T-waves (simulated ECG signal, see \cite{McSharry2003}). \textbf A) KL divergence ($\displaystyle \KL$) between true and generated state space distributions as a function of $\tau$.  Unstable (globally diverging) system estimates were removed. \textbf B) Average MSE between power spectra (slightly smoothed) of true and reconstructed DS. \textbf C) Average normalized MSE between power spectra of true and reconstructed DS split according to low ($\leq\SI{2.5}{Hz}$) and high ($>\SI{2.5}{Hz}$) frequency components. Error bars~=~SEM in all graphs. \textbf D) Example of (best) generated time series (standardized, red=reconstruction with $\tau = 1000/3600$).}
    \label{fig:results_ecg_signal}
\end{figure*}

\begin{figure*}[t] 
    \center{\includegraphics[trim=1.9cm 0 2cm 0cm,clip,width=\textwidth]
    {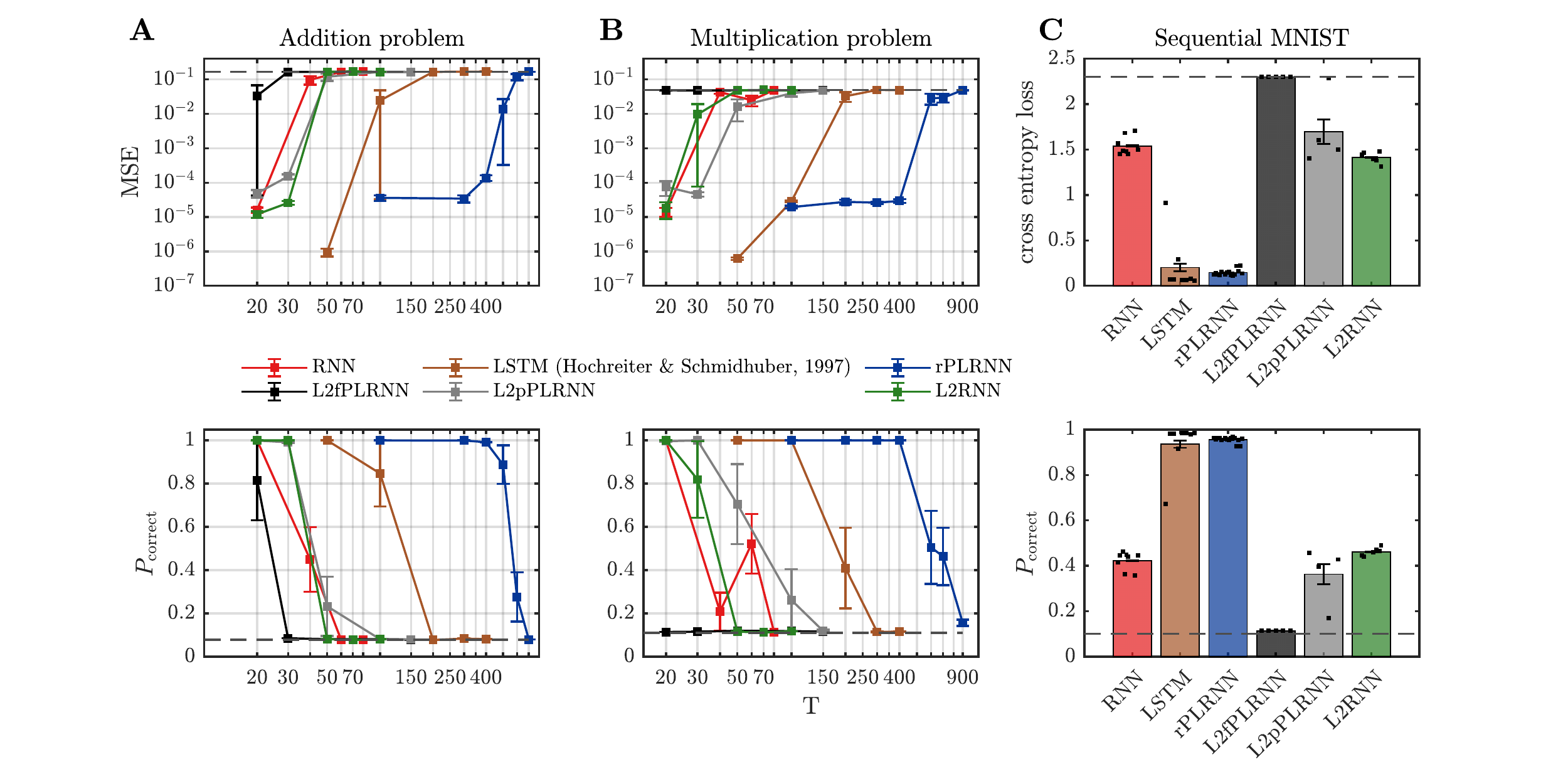}} 
    \caption{Same as Fig. \ref{fig:comparison_all}, illustrating performance for L2RNN (vanilla RNN with L2 regularization on all weights) and L2fPLRNN (PLRNN with L2 regularization on all weights) on the three problems shown in Fig. \ref{fig:comparison_all}. Note that the L2fPLRNN is essentially not able to learn any of the tasks, likely because a conventional L2 norm drives the PLRNN parameters \emph{away} from a manifold attractor configuration (as supported by Fig. \ref{fig:figmainnew} and Fig. \ref{fig:jacobian_multiplication_mnist}). Results for rPLRNN, vanilla RNN, L2pPLRNN, and LSTM have been re-plotted from Fig. \ref{fig:comparison_all} for comparison.}
    \label{fig:suppl_comparison_l2fplrnn}
\end{figure*}

\begin{figure*}[ht] 
    \centering
    \begin{subfigure}{}
        \includegraphics[trim=0 0cm 0 0cm,clip,width=0.8\textwidth]{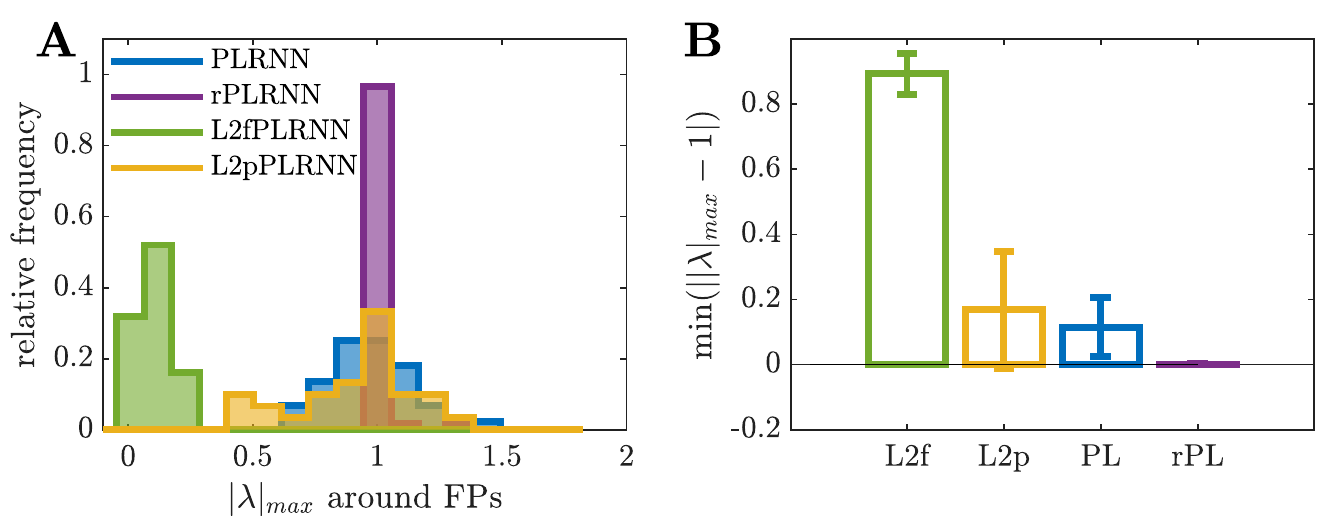}
    \end{subfigure}
    \begin{subfigure}{}
        \includegraphics[trim=0 0cm 0 0cm,clip,width=0.8\textwidth]{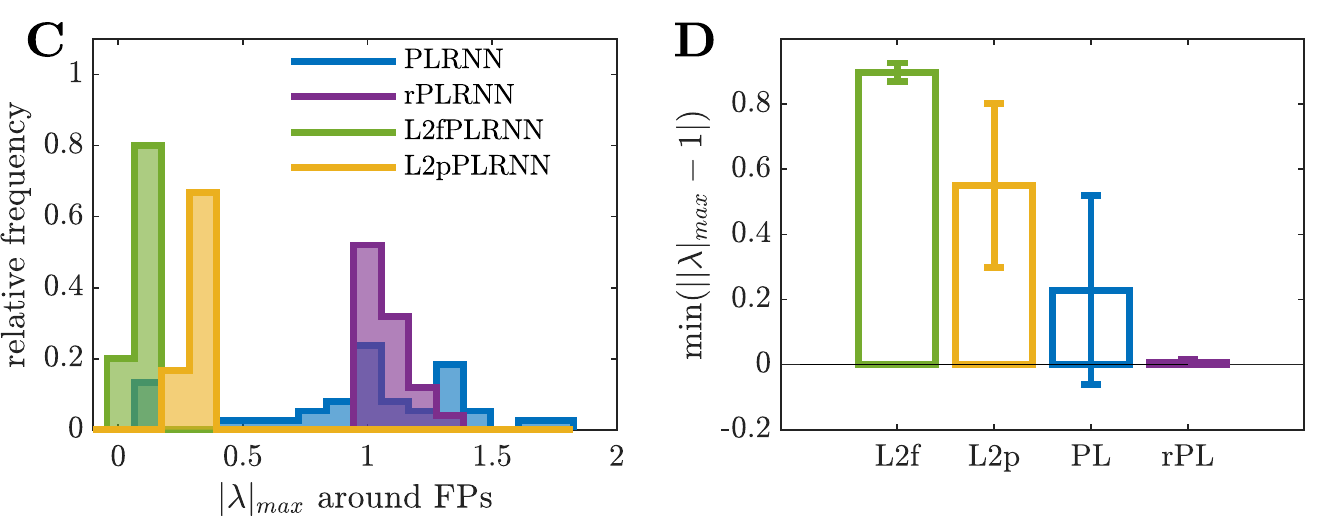}
    \end{subfigure}
    \caption{Same as Fig. \ref{fig:figmainnew} for \textbf{A, B}) multiplication problem, and \textbf{C, D}) sequential MNIST. Error bars = stdv.}
    \label{fig:jacobian_multiplication_mnist}
\end{figure*}


\begin{figure*}[ht] 
    \center{\includegraphics[trim=0 0 0 0,clip,width=\textwidth]
    {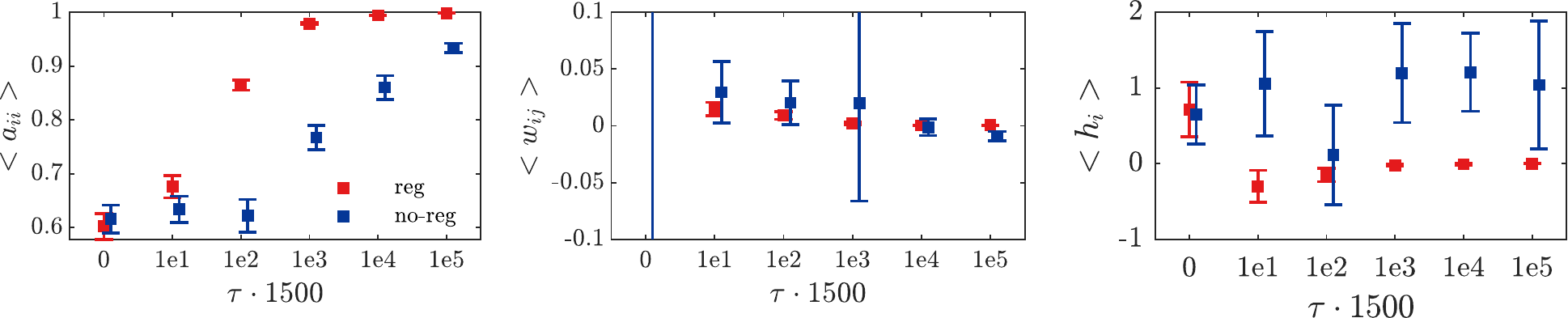}}
    \caption{Effect of regularization strength $\tau$ on rPLRNN network parameters (cf. \eqref{eq:PLRNN}) (regularized parameters for states $m\leq M_{\mathrm{reg}}$, \eqref{eq:PLRNN}, in  red). Note that some of the non-regularized network parameters (in blue) appear to systematically change as well as $\tau$ is varied. }
    \label{fig:S9}
\end{figure*}

\begin{figure*}[t]
    \center{\includegraphics[trim=56 3 61 5,clip,width=\textwidth]
    {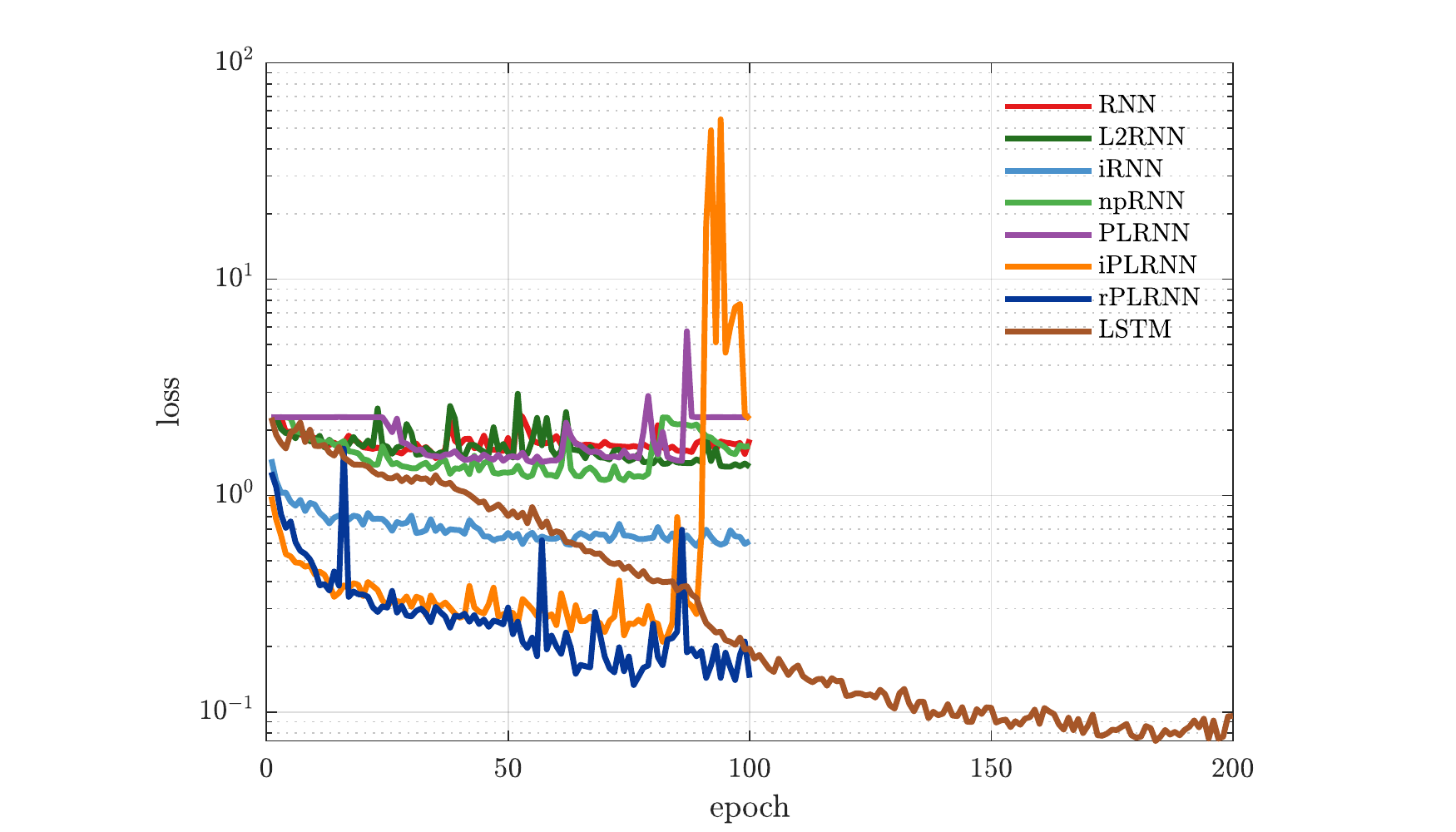}}
    \caption{\label{fig:loss_plot} Cross-entropy loss as a function of training epochs for the best model fits on the sequential MNIST task. Note that LSTM takes longer to converge than the other models. LSTM training was therefore allowed to proceed for 200 epochs, after which convergence was usually reached, while training for all other models was stopped after 100 epochs. Also note that although for the best test performance on seq. MNIST shown here LSTM slightly supersedes rPLRNN, on average rPLRNN performed better than LSTM (as shown in Fig. \ref{fig:comparison_all}\textbf{C}), despite having much fewer trainable parameters (when LSTM was given about the same number of parameters as rPLRNN, i.e. $M/4$, its performance fell behind even more).}
\end{figure*}
\begin{figure*}[t]
    \center{\includegraphics[trim=56 11.5cm 61 11.5cm,clip,width=\textwidth]
    {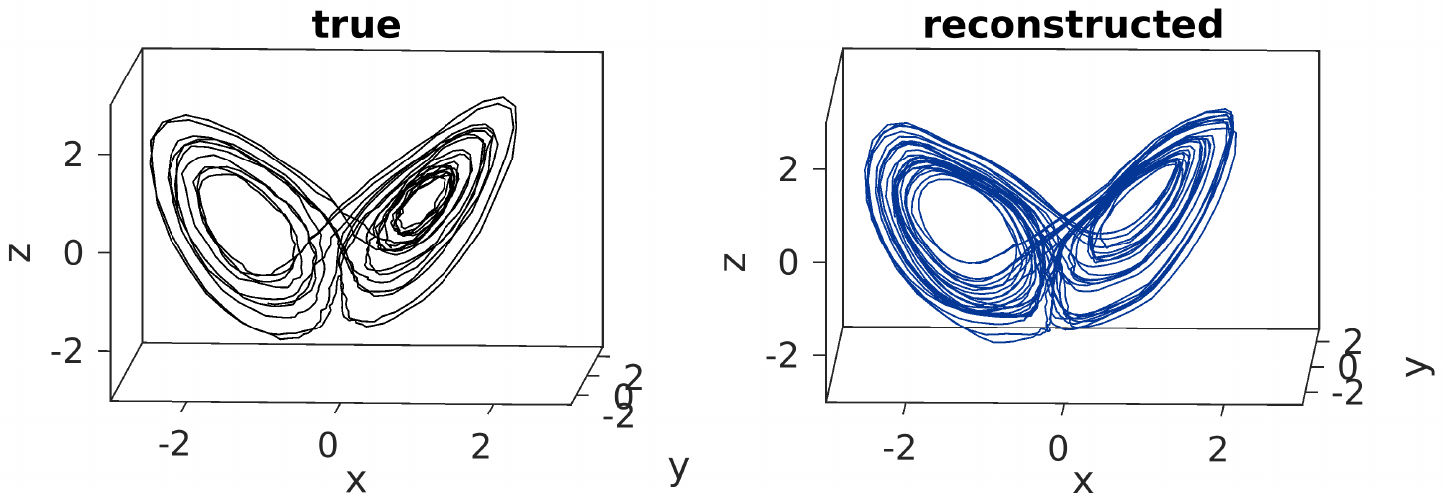}}
    \caption{\label{fig:Sx_Lorenz3} Example reconstruction of a chaotic system, the famous 3d Lorenz equations, by the rPLRNN. Left: True state space trajectory of Lorenz system; right: trajectory simulated by rPLRNN ($\tau=100/T$, $M=14$) after training on time series of length $T=1000$ from the Lorenz system.}
\end{figure*}

\end{document}